\journal{Transportation Research Part B}
\newtheorem{theorem}{Theorem}
\newtheorem{lemma}{Lemma}
\newtheorem{assumption}{Assumption}
\newtheorem{definition}{Definition}
\begin{document}

\begin{frontmatter}

\title{Macroscopic Traffic Flow Modeling with Physics Regularized Gaussian Process: A New Insight into Machine Learning Applications}

\author[1]{Yun Yuan}
\author[1]{Xianfeng Terry Yang\corref{cor1}*} 
\ead{x.yang@utah.edu} 
\author[1]{Zhao Zhang}
\author[2]{Shandian Zhe}

\address[1]{Department of Civil \& Environmental Engineering, University of Utah, Salt Lake City, UT 84112, USA}
\address[2]{School of Computing, University of Utah, Salt Lake City, UT 84112, USA}

\begin{abstract}
Despite the wide implementation of machine learning (ML) techniques in traffic flow modeling recently, those data-driven approaches often fall short of accuracy in the cases with a small or noisy dataset. To address this issue, this study presents a new modeling framework, named physics regularized machine learning (PRML), to encode classical traffic flow models (referred as physical models) into the ML architecture and to regularize the ML training process. More specifically, a stochastic physics regularized Gaussian process (PRGP) model is developed and a Bayesian inference algorithm is used to estimate the mean and kernel of the PRGP. A physical regularizer based on macroscopic traffic flow models is also developed to augment the estimation via a shadow GP and an enhanced latent force model is used to encode physical knowledge into stochastic processes. Based on the posterior regularization inference framework, an efficient stochastic optimization algorithm is also developed to maximize the evidence lowerbound of the system likelihood. To prove the effectiveness of the proposed model, this paper conducts empirical studies on a real-world dataset which is collected from a stretch of I-15 freeway, Utah. Results show the new PRGP model can outperform the previous compatible methods, such as calibrated pure physical models and pure machine learning methods, in estimation precision and input robustness.
    \end{abstract}

    \begin{keyword}
        macroscopic traffic flow model\sep physics regularized machine learning\sep multivariate Gaussian process\sep posterior regularization inference
    \end{keyword}

\end{frontmatter}

\section{Introduction}\label{sec:1}

Traffic state (i.e. flow, speed, and density) estimation (TSE) is the precursor of a variety of advanced traffic operation tasks and plays a key role in traffic management. In early stages, macroscopic traffic dynamics were found to be similar to hydrodynamics. By borrowing  concepts from the fluid mechanism, flow, speed, and density were defined and their relationship, named the fundamental diagram, was discovered. Based on these definitions, macroscopic traffic flow models were developed based on the conservation law and momentum and a set of kinematic wave models were also formulated \citep{seo2017traffic}. 
However, most models, derived under ideal theoretical conditions, require great efforts for parameter calibrations and are difficult to work with noisy and fluctuated data collected by traffic sensors.

Then to capture the measurement errors, stochastic traffic flow models were developed for the investigation and explanation of a variety of observed traffic phenomena, which are also better suited for real-time traffic state estimation and forecasting \citep{jabari2014probabilistic}.
Since the deterministic prominent models and their higher-order extensions are ill-posed, researchers developed stochastic traffic flow models in two categories. The first category used stochastic extensions \citep{gazis1971line, szeto1972application, gazis2003kalman, wang2005real, wang2007real}, which were performed by adding Gaussian noises to the model expressions and obtained real-world data were used to quantify those noises.
However, \cite{jabari2012stochastic} pointed out that those simply-noised models could lead to the possibility of: (i) causing negative sample paths and (ii) producing mean dynamics that do not coincide with the original deterministic dynamics due to nonlinearity.
The second category includes stochastic traffic models such as Botlzmann-based models \citep{prigogine1971kinetic, paveri1975boltzmann}, Markovian queuing network approaches \citep{davis1994estimating,kang1995estimation,di2010hybrid,osorio2011dynamic,jabari2012stochastic}, and cellular automaton based models \citep{nagel1992cellular, gray2001ergodic, sopasakis2006stochastic,sopasakis2012lattice}. Stochastic traffic models do not have the same concerns of the models in the first category. However, they may lose the analytical tractability \citep{jabari2013stochastic}, defined as the ability of obtaining a mathematical solution such as a closed-form expression, and are much more similar to data-driven approaches than classical analytical models. 


In view of the increasing data availability, many data-driven methods were developed because they do not require explicit theoretical assumptions and have a remarkably low computational cost in the testing phase.
In the literature, data-driven approaches include autoregressive integrated moving average \cite{zhong2004estimation}, Bayesian network \cite{ni2005markov}, kernel regression \citep{yin2012imputing}, fuzzy c-means clustering \citep{tang2015hybrid}, k-nearest neighbors clustering \citep{tak2016data}, stochastic principal component analysis \citep{li2013efficient,tan2014robust}, Tucker decomposition \citep{tan2013tensor}, deep learning \citep{duan2016efficient,polson2017deep,wu2018hybrid}, Bayesian particle filter \citep{polson2017bayesian}, etc.
However, due to the data-driven nature, those machine learning (ML) models fundamentally suffers from three scenarios: 
(i) training data are scarce and insufficient to reveal the complexity of the system, 
(ii) training data are noisy and include much incorrect/misleading information, and 
(iii) test data are far from the training examples, i.e., extrapolation. In these scenarios which are unfortunately very common in the real-world, their performance can drop dramatically along with large and/or biased estimations.  
Fig.~\ref{fig:flaw_data1} shows an example of applying a pure ML method on a dataset that contains flawed data and its biased estimation (dash line) diverges from ML methods on accurate data (solid line). Moreover, another deficiency of ML models is that they are developed as "black boxes" and researchers are hard to interpret model results. 

\begin{figure}[h!]     
    \centering
    \begin{subfigure}[b]{0.4\textwidth}
        \centering
        \includegraphics[width=\textwidth]{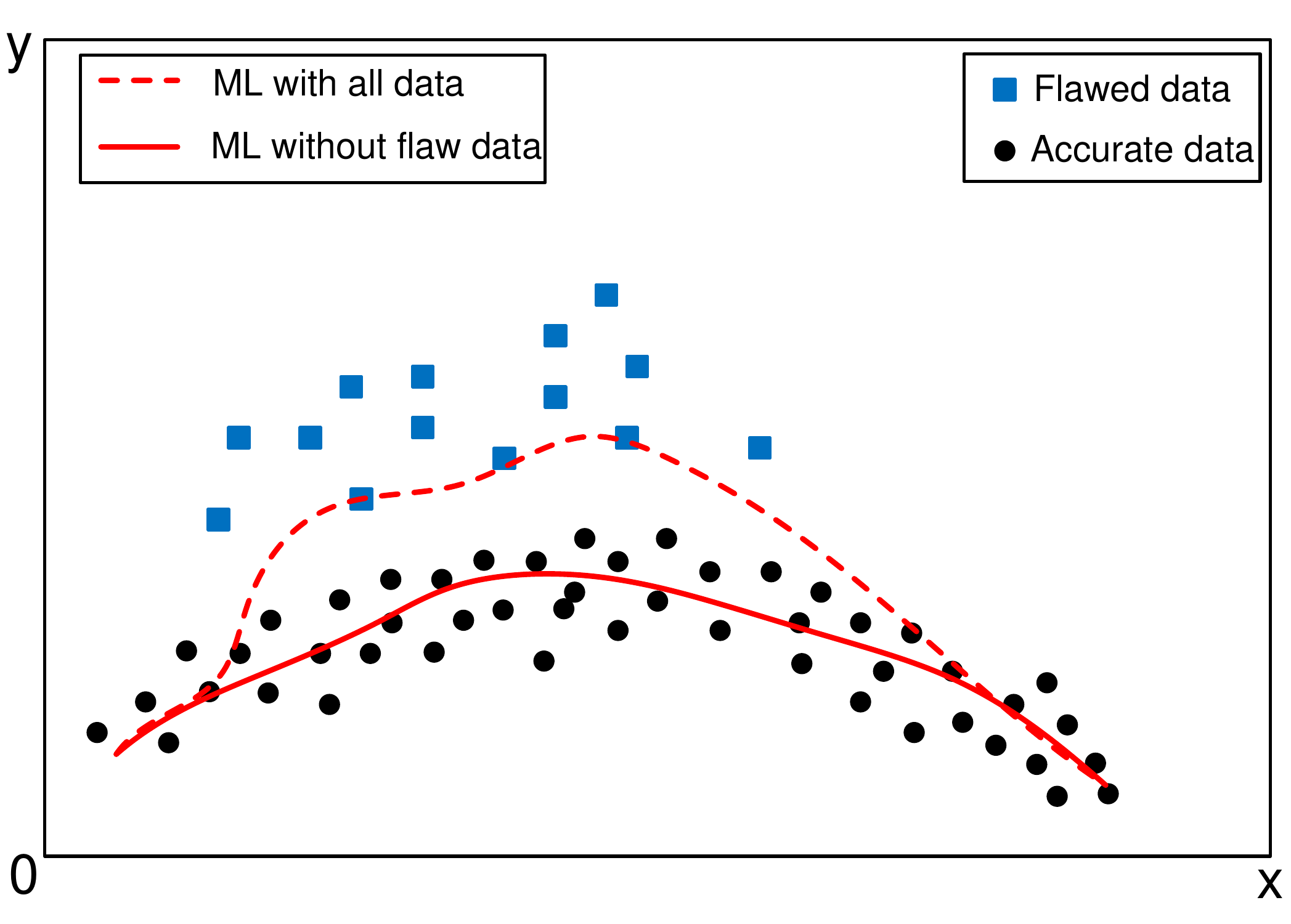}
        \caption{ML with flawed data}
        \label{fig:flaw_data1}
    \end{subfigure}
    \hfill
    \begin{subfigure}[b]{0.4\textwidth}
        \centering
        \includegraphics[width=\textwidth]{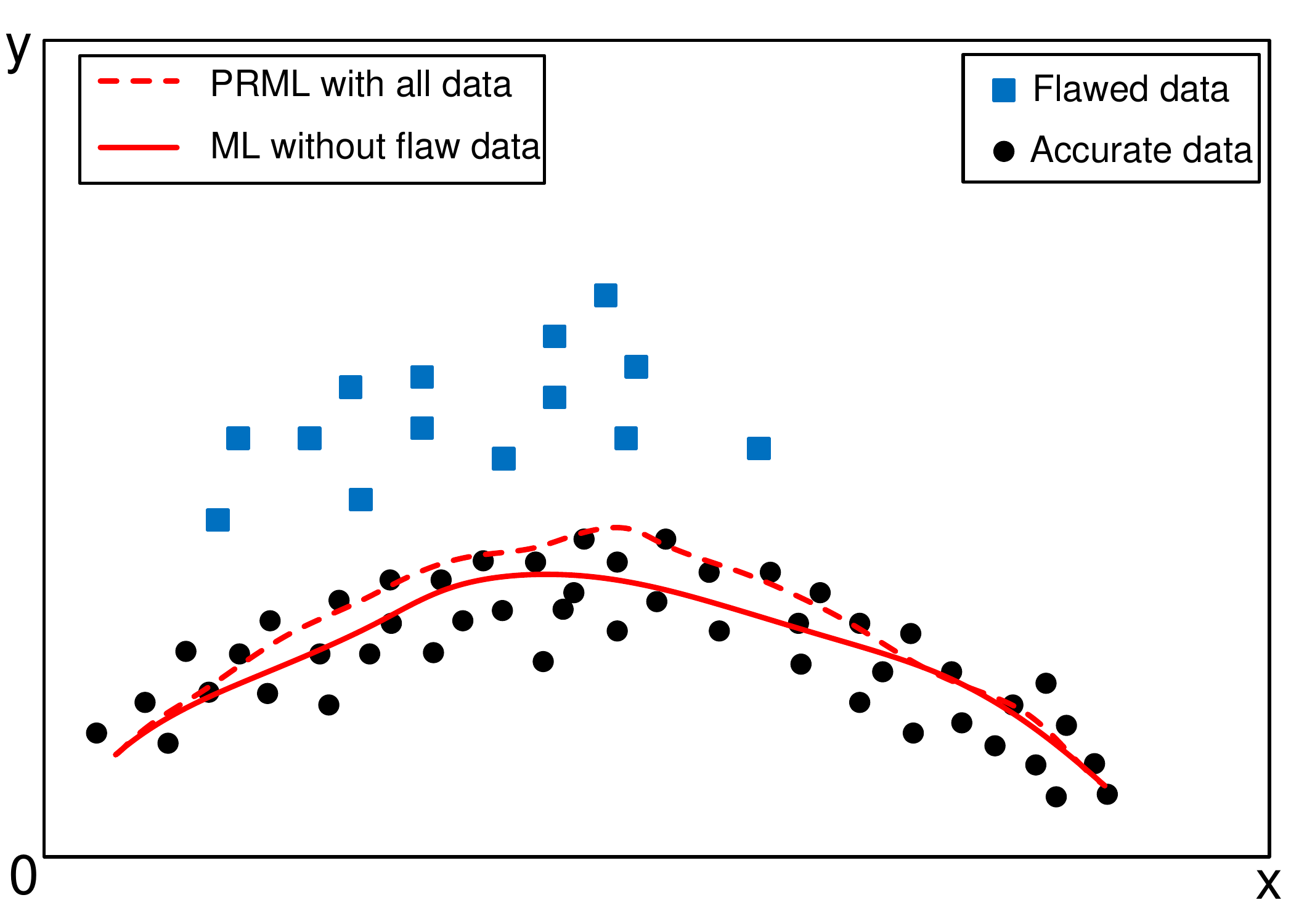}
        \caption{PRML with flawed data}
        \label{fig:flaw_data2}
    \end{subfigure}
    \vspace{-0.1in}
    \caption{Comparison between pure ML and the proposed PRML}
    \label{fig:flaw_data}
    \vspace{-0.2in}
\end{figure}

In summary, classical traffic flow models can effectively characterize the underlying mechanisms (i.e., physical processes of traffic) of transportation systems, however, are usually developed with strong assumptions, require great efforts in parameter calibrations, and fall short of capturing data uncertainties. On the other hand, the performances of pure data-driven approaches such as ML models highly depend on the data quality and their results are usually hard to be interpreted. Hence, recognizing those limitations, this research aims to develop an innovative approach, named physics regularized machine learning (PRML), to fill the gap between classical traffic flow (physical) models and ML methods. The contributions of this study are significant. Compared with physical models, the PRML can 
(1) use the ML portion to capture the uncertainties in estimation which beyond the capability of the closed-form expressions; and 
(2) eliminate the efforts in calibrating model parameter by a sequential learning process. Different from pure ML models, the PRML is 
(1) more robust under the condition of the noisy/flawed dataset as valuable knowledge from physical models can help regularize the fitting process (see Fig.~\ref{fig:flaw_data2}); and 
(2) more explainable in terms of the model performance in estimation accuracy. With this innovative modeling framework, this research is expected to bring a new insight into ML applications in transportation and build a bridge to connect the researches of classical traffic flow models and more recent data-driven approaches.

More specifically, this study develops a physics regularized Guassian process (PRGP) method for TSE by integrating three macroscopic traffic flows models with Gaussian process (GP), implementing a shadow GP to regularize the original GP, and incorporating enhanced Latent Force Models (LFM) \citep{raissi2017machine} to encode the traffic flow model knowledge.
To learn the GPs from data efficiently, this study also proposes an inference algorithm under the posterior regularization inference framework. To justify the effectiveness of the proposed methods, numerical experiments with field data are conducted on a I-15 freeway segment in Utah and the performances of PRGP models are compared with that of both classical traffic flow models and pure ML models.
To further investigate the robustness of PRGP, synthesized noises are also added to the training set and results show PRGP is much more resilient to the noisy/flawed dataset.

The remainder of this paper is organized as follows.
Section~\ref{sec:2} reviews the existing studies regarding the TSE modeling and estimation methods as well as the Gaussian process and inference methods.
In Section~\ref{sec:3}, the integrated GP and enhanced LFM for encoding physics knowledge into Bayesian statistics and the posterior regularized inference algorithm are derived.
In Section~\ref{sec:4}, the case study on a real-world data from the interstate freeway I-15 is conducted to justify the proposed methods.
The conclusion section summarizes the the critical findings and future research directions.

\section{Literature Review}\label{sec:2}
\subsection{Macroscopic Traffic flow model}
To effectively control traffic flows, TSE has been recognized as a critical fundamental task of freeway traffic management in the literature.
TSE refers to estimating a complete traffic state based on limited traffic measurement data from stationary sensors.
Key parameters, i.e. traffic flow, speed, and density, of the macroscopic traffic flow model are used to approximate the continuous traffic state with the fundamental diagram.
Deterministic traffic flow model usually consist of a conservation law equation and a fundamental relationship \citep{seo2017traffic}.
For formalization, key concepts, including cumulative flow, flow, density, speed, are defined as follows.
\begin{definition}
    The cumulative flow $N(t,x)$ is defined as the number of vehicles that passed the position $x$ by the time $t$.
\end{definition}
\begin{definition}
    The flow $q$, density $\rho$, speed $v$ are defined in Eqs.~\ref{eq:def-q}-\ref{eq:def-v}.
    \begin{equation}
        \label{eq:def-q}
        q(t,x)=\partial_t N(t,x)
    \end{equation}
    \begin{equation}
        \label{eq:def-rho}
        \rho(t,x)=-\partial_x N(t,x)
    \end{equation}
    \begin{equation}
        \label{eq:def-v}
        v(t,x)=\frac{q(t,x)}{\rho(t,x)}
    \end{equation}
\end{definition}
In traffic flow studies, researchers found the existence of the fundamental diagram (FD) to illustrate the relationship among flow, speed and density:
\begin{definition}
    The fundamental diagram is defined as the relationship among flow, speed, and density, as shown in Eqs.~\ref{eq:fd-v}-\ref{eq:fd-q}.
    \begin{equation}
        \label{eq:fd-v}
        v=V{\rho}
    \end{equation}
    \begin{equation}
        \label{eq:fd-q}
        q = \rho V(\rho)
    \end{equation}
\end{definition}
\noindent where $V(\cdot)$ denotes the density-speed function.
Macroscopic traffic flow models were proposed based on continuum fluid approximation to describe the aggregated behavior of traffic, which can generally be classified into three basic formulations.
The well-known first-order Lighthill-Whitham-Richards (LWR) model \citep{lighthill1955kinematic,richards1956shock} is formulated in Eqs.~\ref{eq:lwr1}-\ref{eq:lwr2}. 
\begin{equation}
    \label{eq:lwr1}
    \partial_t \rho + \partial_x(\rho v) = 0
\end{equation}
\begin{equation}
    \label{eq:lwr2}
    v=V(\rho)
\end{equation}
The LWR model can describe simple behaviors, such as traffic jam and shockwave, however, has limitations in reproducibility of more complex phenomena.

To overcome such limitations, second-order models use the additional momentum equation to describe the dynamics of speed.For example, Payne-Whitham (PW) model \citep{payne1971models,whitham1975linear} is formulated by Eqs.~\ref{eq:pw1}-\ref{eq:pw2}, in which Eq.\ref{eq:pw2} is the momentum equation.
\begin{equation}
    \label{eq:pw1}
    \partial_t\rho+\partial_x(\rho v) = 0
\end{equation}
\begin{equation}
    \label{eq:pw2}
    \partial_t v+v\partial_x v=-\frac{V-V(\rho)}{\tau_0}-\frac{c^2_0}{\rho}\partial_x\rho
\end{equation}
where $\tau_0$ denotes the relaxation time and $c^2_0$ denotes a parameter related to driver anticipation.
Despite the success of the PW model and its extensions \citep{papageorgiou1989macroscopic}, the PW-like models may produce non-realistic outputs, such as negative speed \citep{del1994reaction,daganzo1995requiem,papageorgiou1998some,hoogendoorn2001state}.

To overcome this limitation, another second-order Aw-Rascle-Zhang (ARZ) model \citep{aw2000resurrection,zhang2002non} is formulated in Eqs.~\ref{eq:arz1}-\ref{eq:arz2}, where another momentum equation is proposed in Eq.~\ref{eq:arz2}.
The original ARZ model was extended extensively in the literature \citep{colombo2003hyperbolic,lebacque2007generic,blandin2013phase,fan2013comparative}.
\begin{equation}
    \label{eq:arz1}
    \partial_t\rho + \partial_x(\rho v) = 0
\end{equation}
\begin{equation}
    \label{eq:arz2}
    \partial_t(v-V(\rho) + v\partial_x (v-V(\rho))=-\frac{v-V(\rho)}{\tau_0}
\end{equation}

However, it should be noted that despite of the elegance of differential equation formalization, the traffic flow model is difficult to estimate due to the nonlinearity and the measure errors of observations in the real world.
Thus, the researchers proposed advanced estimation methods to facilitate the application of the models.

\subsection{Stochastic estimation methods}
To use field data to capture traffic flow uncertainties, some estimation models with stochastic extensions are later derived\cite{seo2017traffic}.
For example, TSE is defined as Boundary Value Problem (BVP) based on partial observations (i.e. boundary conditions) \citep{coifman2002estimating,laval2012stochastic,kuwahara2015theory,blandin2013phase,fan2013comparative}. 
In solving BVPs, the boundary conditions are assumed to be correct.
However, the real-world measure error can not be ignored.

Considering system and observation noise, data assimilation or inverse modeling techniques were then developed for model estimation and calibration.
In the literature, there exist three ways to add randomness in the traffic models: 
(a) stochastic initial and boundary conditions, 
(b) stochastic source terms (e.g. inflows), and 
(c) stochastic speed-density relationship or fundamental diagram \citep{sumalee2011stochastic}.
To capture the measure error in data, a stochastic modeling method is performed by adding Gaussian noise to the traffic state estimates \cite{gazis1971line, szeto1972application, gazis2003kalman, wang2005real, wang2007real,sumalee2011stochastic}.
For example, in view of the nonlinearity of the second order traffic flow model, \cite{gazis2003kalman,wang2005real} assumed the error terms on the formula and developed extended Kalman filter (EKF) to estimate a PW-like discrete model \citep{papageorgiou1989macroscopic}.

Note that the applying EKF to non-differentiable models (e.g. Cell Transmission Model) is not rigorous \citep{blandin2012sequential}.
The unscented Kalman filter (UKF) overcomes the shortcomings of EKF by avoiding an analytical differentiation \citep{mihaylova2006unscented}.
The ensemble Kalman filter (EnKF) employs the Monte Carlo simulation to handle nonlinear and nondifferentiable systems, but is computational costly \citep{work2008ensemble}.
The particle filter (PF) uses Monte Carlo simulation and is computation-consuming as well \citep{mihaylova2004particle}.
The simulation-based methods were further extended to reduce the computational cost.

In summary, despite the adequate applications of these methods, the stochastic extension models have two critical theoretical deficiencies: (a) negative sample paths and (b) the mean dynamics that do not coincide with the original deterministic dynamics due to the nonlinearity \citep{jabari2012stochastic,jabari2013stochastic, jabari2014probabilistic, pascale2013estimation, wada2017optimization}.
In view of such deficiencies, the intractable methods, stochastic traffic flow models, were proposed in view of the tradeoff between relaxing assumptions and the model tractability, such as 
(a) Botlzmann-based methods \citep{prigogine1971kinetic, paveri1975boltzmann}, 
(b) Markovian queuing methods \citep{davis1994estimating,kang1995estimation,di2010hybrid,osorio2011dynamic,jabari2012stochastic},
(c) cellular automation based methods \citep{nagel1992cellular, gray2001ergodic, sopasakis2006stochastic,sopasakis2012lattice}.

\subsection{Data-driven method}
More recently, with much enriched data, researchers started to seek data-driven methods, such as machine learning, Bayesian statistics, etc.
Among the existing data-driven methods, Gaussian process (GP) is a powerful non-parametric function estimator and has various successful applications.
In traffic modeling, GP-based methods are applied in traffic speed imputation \citep{rodrigues2018heteroscedastic,rodrigues2018multi}, public transport flows \cite{neumann2009stacked}, traffic volume estimation and prediction \cite{xie2010gaussian}, travel time prediction \citep{ide2009travel}, driver velocity profiles \citep{armand2013modelling} and traffic congestion \cite{liu2013adaptive}. 
It can capture relationship between stochastic variables without requiring strong assumptions (such as memorylessness).

However, as a data-driven approach, GPs can perform poorly when the training data are scarce and insufficient to reflect the complexity of the system or testing inputs are far away from the training data.
Few traffic estimation methods were developed based on GPs because it's difficult to obtain deductive insights and leverage physics knowledge.

Taking advantage of valuable knowledge from physical models (i.e., classical traffic flow model), we aim to encode them into GPs to improve their performance, especially when training on scarce data and marking estimations in areas with flawed observations.
However, it shall be noted that using GP to represent physical knowledge, modeled by differential equations, has two major difficulties: 
(a) differential equations are hard to represent as a probabilistic term, such as priors and likelihoods; 
(b) in practice, physics knowledge is usually incomplete, the differential equations can include latent functions and parameters (e.g. unobserved noise, inflows, outflows), making their presentations and joint estimation with GPs even more challenging.
To better encode the differential equations in GPs, \cite{alvarez2009latent, alvarez2013linear} proposed a Latent Force Models (LFM) for training and then the estimation of GP would be based on the convolved kernel upon Green's function.
Later on, \cite{raissi2017machine} extended the framework by assuming observable noise. 
However, the assumption of LFM is too restrictive since many realistic flexible equations are nonlinear, or linear but do not have analytical Green's function.
Also, the complete kernel is still infeasible to obtain.
Thus, it is more feasible to use expressive kernels, e.g. deep kernels \citep{wilson2016deep}.

In summary, there lack a hybrid framework to consider the physics knowledge (i.e. kinematic wave differential equations and fundamental diagram) and the data-driven methods with minimal assumptions and reasonable computational cost.
This paper aims to fill the gap by proposing a Gaussian process based data-driven method considering tractable physics knowledge.
\subsection{Gaussian process and Bayesian inference}
Gaussian process is a general framework for measuring of the similarity between observations from training data to estimate the unobserved values.
\cite{rodrigues2018heteroscedastic} and \cite{rodrigues2018multi} applied the multi-output Gaussian processes to model the complex spatiotemporal patterns about incomplete traffic speed data.
The key task is to learn the kernel (i.e. covariance) function between the variables. 
The previous studies \citep{calderhead2009accelerating,barber2014gaussian,heinonen2018learning} investigated the GP ordinary differential derivatives.
They assumed the noisy forces are observable, for example, the observable noisy forces \citep{graepel2003solving}, and observable noisy forces and solutions \citep{raissi2017machine}.

To model the observable noisy forces, Latent Force Models (LFM) \citep{alvarez2009latent,alvarez2013linear} first placed a prior over the latent forces, and then derives the covariance of the solution function via the convolution operation.
Despite the successful applications, such as transcriptional regulation modeling \citep{lawrence2007modelling}, the LFM method has two critical deficiencies: (a) it requires the linear differential equations and the analytical Green's functions, which is restrictive and does not fit the traffic flow model; and (b) the convolution procedure is computationally difficult and restrictive.

To address these issues, this paper generalizes the LFM framework and enables the nonlinear differentiation to encode the physics knowledge.
To key task is to optimize model likelihood on data and a penalty term that encodes the constraints over the posterior of the latent variables.
Via the penalty term, the domain knowledge or constraints outright to the posteriors rather than through the priors and a complex, intermediate computing procedure, hence it can be more convenient and effective. 
In view of computational efficiency, this paper further employs a posterior regularization algorithms to solve the likelihood optimization problem \citep{ganchev2013cross,zhu2014bayesian,libbrecht2015entropic,song2016kernel}.
To the best of authors knowledge, this new modeling framework is innovative and has not been developed by other transportation studies yet.
The proposed method is designed to avoid the error-prone simple stochastic assumptions and leverage the physics knowledge in a data-driven framework, which also has remarkable performance in scare data situations and unobserved inflow and outflows (e.g. an arterial stretch).

\section{Methodology}\label{sec:3}
\subsection{Macroscopic traffic flow model with Physics Regularized Gaussian Process}
\subsubsection{Gaussian process}
Suppose we aim to learn a machine $\mathbf{f} :\mathbb{R}^d \rightarrow \mathbb{R}^{d^\prime}$, it will map a $d$-dimensional Euclidean space to a $d^\prime$-dimensional Euclidean space from a training set $\mathcal{D} =(\mathbf{X},\mathbf{Y})$, 
where $\mathbf{X}=[\mathbf{x}_{1},\ldots,\mathbf{x}_{N}]^\intercal$ is the input vector, 
$\mathbf{Y}=[\mathbf{y}_{1},\ldots,\mathbf{y}_{N} ]^\intercal$ is the output vector, 
$\mathbf{x}$ is the $d$ dimensional input vector, 
$\mathbf{y}$ is the $d^\prime$ dimensional output vector, 
$\mathbf{f}=[f(\mathbf{x}_{1}),\ldots,f(\mathbf{x}_{N})]^\intercal$ is the learning function, 
and $N$ refers to the sample size.
Note that $\mathbf{X},\mathbf{Y}$ may have physical meanings only in their feasible domains.
\begin{assumption}
    It is assumed that the input $\mathbf{X}$ and the true output $\mathbf{f}$ follow a multivariate Gaussian distribution as shown in Eq.~\ref{eq:pfx} , 
    where $\mathcal{N}(\cdot,\cdot)$ represents the Gaussian distribution, 
    $\mathbf{m}$ denotes the mean matrix, 
    and $\mathbf{K}$ represents the covariance matrix.
\begin{equation}
    \label{eq:pfx}
    p(\mathbf{f}|\mathbf{X})=\mathcal{N}(\mathbf{f}|\mathbf{m},\mathbf{K})
\end{equation}
\end{assumption}
Note that Gaussian process in $d$-dimension is also called Gaussian Random Field and the above definition involves the multi-dimensional outputs. 
\begin{assumption}
It is assumed that the observations $\mathbf{Y}$ have an isotropic Gaussian noise, as shown in Eq.~\ref{eq:pyf}.
    \begin{equation}
        \label{eq:pyf}
        p(\mathbf{Y}|\mathbf{f})=\mathcal{N}(\mathbf{f},\tau^{-1}\mathbf{I})
    \end{equation}
where $\tau$ refers to the inverse variance, and isotropic noise means that the noise from each dimension is independent identically distributed (i.i.d.) and of the same variance $\tau$.
\end{assumption}

Then, by Marginalizing out $\mathbf{f}$, we can obtain the marginal likelihood as shown in Eq.~\ref{eq:pyx}.
\begin{equation}
    \label{eq:pyx}
    p(\mathbf{Y}|\mathbf{X})=\mathcal{N}(\mathbf{Y}|\mathbf{0},\mathbf{K}+\tau^{-1} \mathbf{I})
\end{equation}
where the kernel matrix $\mathbf{K}$ is defined in Eq.~\ref{eq:kernel_def}.
\begin{equation}
    \label{eq:kernel_def}
    [\mathbf{K}]_{ij}=k(x_i,x_j)
\end{equation}

Commonly, Assumption~\ref{assumption:smooth}, which requires the kernel has derivatives of all orders in its domain, would also be necessary and the positive-definite kernels include linear, polynomial, radial-basis, Laplacian, etc. \citep{fasshauer2011positive}.
\begin{assumption}
    \label{assumption:smooth}
    The kernel $k(\cdot,\cdot)$ is assumed to be positive-definite and smooth. 
\end{assumption}

Given the new input $\mathbf{x}^*$, the $\mathbf{f}$ function value can be estimated based on Eq.~\ref{eq:pfxxy}.
\begin{equation}
    \label{eq:pfxxy}
    p(\mathbf{f}(\mathbf{x}^*)|\mathbf{x}^*,\mathbf{X},\mathbf{Y})=\mathcal{N}(\mathbf{f}(\mathbf{x}^*)|\mu(\mathbf{x}^*),\nu(\mathbf{x}^*))
\end{equation}
where the mean $\mu(\mathbf{x}^*)$, standard deviation $\nu(\mathbf{x}^*)$, and the kernel vector $\mathbf{k}_*$ are calculated in Eqs.~\ref{eq:mu_def}-\ref{eq:k_def}, respectively.
\begin{equation}
    \label{eq:mu_def}
    \mu(\mathbf{x}^*)=\mathbf{k}_*^\intercal (\mathbf{K}+\tau^{-1} \mathbf{I})^{-1} \mathbf{Y}
\end{equation}
\begin{equation}
    \label{eq:nu_def}
    \nu(\mathbf{x}^*)=k(\mathbf{x}^*,\mathbf{x}^*)-\mathbf{k}_*^\intercal(\mathbf{K}+\tau^{-1} \mathbf{I})^{-1} \mathbf{k}_*
\end{equation}
\begin{equation}
    \label{eq:k_def}
    \mathbf{k}_*=[k(x^*,x_1),\ldots,k(x^*,x_N)]^\intercal
\end{equation}

If the kernel $\mathbf{K}$ has been learned from data $\mathcal{D}$, the estimated output matrix $\mathbf{f(x^*)}$ can be calculated via the reparameterization \citep{kingma2014adam} as shown in Eqs.~\ref{eq:repara1}-\ref{eq:repara2}, where $\epsilon$ is standard normally distributed.
\begin{equation}
    \label{eq:repara1}
    \epsilon = \mathcal{N}(0,1)
\end{equation}
\begin{equation}
    \label{eq:repara2}
    \mathbf{f(x^*)} = \mu(\mathbf{x}_*) + \epsilon * \sqrt{\nu(\mathbf{x}^*)}
\end{equation}

Fig.~\ref{fig:gp} shows the structure of the conventional GP method, where the circled nodes denote the random vector,s the shaded node represent known vectors, and the arrows indicate the conditional probabilities.
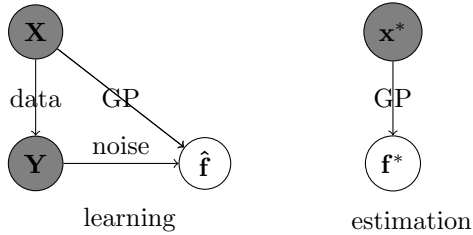
\begin{figure}[h!]
    \centering
    \begin{tikzpicture}[node0/.style={circle,draw},
                        node1/.style={circle,draw,fill=gray},
                        edge1/.style={->}]
        \node [node1] (x) at (-3.5, 3) {$\mathbf{X}$};
        \node [node1] (xs) at (1.25, 3) {$\mathbf{x^*}$};
        \node [node1] (y) at (-3.5, 1.25) {$\mathbf{Y}$};
        \node [node0] (f) at (-1.25, 1.25) {$\mathbf{\hat{f}}$};
        \node [node0] (fs) at (1.25, 1.25) {$\mathbf{f^*}$};
        \node (l) at (-2.25, 0.5) {learning};
        \node (p) at (1.5, 0.5) {estimation};
        \draw [edge1] (x) to node {data} (y);
        \draw [edge1] (y) to node[above] {noise} (f);
        \draw [edge1] (x) to node {GP} (f);
        \draw [edge1] (x) to (f);
        \draw [edge1] (xs) to node {GP} (fs);
        
    \end{tikzpicture}
    
    \caption{The conventional framework for inferring Gaussian process}
    \label{fig:gp}
       \vspace{-0.2in}
\end{figure}
\subsubsection{Latent Force Model}
In many applications, physics knowledge, expressed as differential equations, provide the insight of the system's mechanism and can be very useful for both estimation and prediction.
In the seminal work of \cite{alvarez2009latent,alvarez2013linear}, they propose latent force models (LFM) that use convolution operations to encode physics into GP kernels.
They assume the differential equations are linear and have analytical Green's function with the kernel of the latent functions.
Given this assumption, the kernel of the target function can be derived by convolving the Green's function with the kernel of the latent functions.
LFM considers $W$ output functions ${f_1(x),\ldots,f_w(x),\ldots,f_W(x)}$, and assumes each output function $f_w$ is governed by a linear differential equation.
\begin{equation}
    \label{eq:lfm}
    \mathscr{L}f_{w}(\mathbf{x}) = u_{w}(\mathbf{x})
\end{equation}
where $\mathscr{L}$ is linear differential operator \citep{courant2008methods}, and $u$ is a latent force function.

\begin{lemma}
    \label{lemma}
    If one side of Eq.~\ref{eq:lfm} is one GP, the other side is another GP.
    The covariance of a GP's derivative and the cross-covariance between the GP and its derivative can be obtained by taking derivatives over the original covariance function.
\end{lemma}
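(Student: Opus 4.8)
The plan is to rely on the defining property of a Gaussian process — that every finite-dimensional marginal is jointly Gaussian — together with the fact that mean-square linear operations preserve joint Gaussianity. First I would recall that $f_w$ being a GP means that for any finite set of inputs $\{\mathbf{x}_1,\ldots,\mathbf{x}_n\}$ the vector $(f_w(\mathbf{x}_1),\ldots,f_w(\mathbf{x}_n))^\intercal$ is multivariate Gaussian, determined by a mean function and the covariance kernel $k(\cdot,\cdot)$. Since any finite linear combination of jointly Gaussian variables is again Gaussian, the first sentence reduces to showing that the action of the linear operator $\mathscr{L}$ on $f_w$ — and, for the converse direction, the action of its inverse — can be realised as a limit of such linear combinations.

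Second, I would treat a single partial derivative $\partial_{x_i} f_w(\mathbf{x})$ as the mean-square limit of the difference quotients $h^{-1}\big(f_w(\mathbf{x}+h\mathbf{e}_i)-f_w(\mathbf{x})\big)$ as $h\to 0$. Each difference quotient is a linear combination of values of $f_w$, hence Gaussian, and finitely many such quotients (at arbitrary base points) are jointly Gaussian; because the Gaussian family is closed under $L^2$ convergence, the limit $\partial_{x_i} f_w$ is again a GP, jointly Gaussian with $f_w$. Iterating over the finitely many derivatives and coefficient multiplications that compose $\mathscr{L}$ shows that $u_w=\mathscr{L}f_w$ is a GP. The converse — if $u_w$ is a GP then so is $f_w=\mathscr{L}^{-1}u_w$ — follows by the same reasoning applied to the convolution with the Green's function, since an integral operator is likewise a mean-square limit of linear combinations (Riemann sums).

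Third, for the covariance identities I would push the differentiation through the expectation. Writing the centred covariance $k(\mathbf{x},\mathbf{x}')=\mathbb{E}\big[(f_w(\mathbf{x})-m(\mathbf{x}))(f_w(\mathbf{x}')-m(\mathbf{x}'))\big]$ and using bilinearity of covariance together with the mean-square convergence of the difference quotients, one obtains $\mathrm{Cov}\big(\partial_{x_i} f_w(\mathbf{x}),f_w(\mathbf{x}')\big)=\partial_{x_i}k(\mathbf{x},\mathbf{x}')$ and $\mathrm{Cov}\big(\partial_{x_i} f_w(\mathbf{x}),\partial_{x'_j} f_w(\mathbf{x}')\big)=\partial_{x_i}\partial_{x'_j}k(\mathbf{x},\mathbf{x}')$; the corresponding statement for a general $\mathscr{L}$ then follows by linearity.

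The main obstacle is justifying the interchange of the limit (derivative) with the expectation, i.e. establishing mean-square differentiability of $f_w$ to the required order. This is exactly where Assumption~\ref{assumption:smooth} does the work: a centred GP is $m$ times mean-square differentiable precisely when the mixed partial derivative $\partial_{x}^{\alpha}\partial_{x'}^{\alpha} k(\mathbf{x},\mathbf{x}')$ exists and is finite on the diagonal for $|\alpha|\le m$, and the positive-definiteness together with smoothness of the kernel supplies this. Once that regularity is secured, the remainder is bookkeeping with Gaussian marginals and the bilinearity of covariance.
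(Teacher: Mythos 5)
Your proposal is correct and takes essentially the same route as the paper, which does not spell the argument out but simply delegates it to the cited references (the latent force model papers and the Gaussian process literature) with the one-line remark that a linear differential operator applied to a GP yields a GP because the derivative of a GP is again a GP; your write-up is the standard mean-square-limit argument behind exactly that remark, including the covariance formulas $\mathrm{Cov}(\partial_{x_i}f(\mathbf{x}),f(\mathbf{x}'))=\partial_{x_i}k(\mathbf{x},\mathbf{x}')$ and $\mathrm{Cov}(\partial_{x_i}f(\mathbf{x}),\partial_{x'_j}f(\mathbf{x}'))=\partial_{x_i}\partial_{x'_j}k(\mathbf{x},\mathbf{x}')$. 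The only caveat worth recording is that the required mean-square differentiability must come from smoothness of the kernel of the GP prior actually placed on $f_w$ (or on the latent forces), which is the intended reading of Assumption~\ref{assumption:smooth}.
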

Lemma~\ref{lemma} is proven by \citep{alvarez2009latent, alvarez2013linear}.
The reasoning is based on that applying a linear differential operator on one GP results in another GP \citep{graepel2003solving} because the derivative of GP is still a GP \citep{williams2006gaussian}.

The latent force function $u$ can be further decomposed as a linear combination of several common latent force functions as follows.
\begin{equation}
    \label{eq:udecomp}
    u_w(\mathbf{x})=\sum_{r=1}^{R}s_{rw} g_{r}(\mathbf{x})
\end{equation}
where $R$ is the number of decomposed force functions, $s$ is the latent matrix.
Since $\mathscr{L}$ is linear, if we assign a GP prior over $u(x)$, $f_w(x)$ has a GP prior as well.
Moreover, if the Green's function, namely the solution of Eq.~\ref{eq:green}, is available, we can obtain Eq.~\ref{eq:green_sol}.
\begin{equation}
    \label{eq:green}
    \mathscr{L}\mathcal{G}(\mathbf{x},\mathbf{s})=\delta(\mathbf{s}-\mathbf{x})
\end{equation}
where $\delta$ is the Dirac delta function, $\mathcal{G}$ is the Green's function.
\begin{equation}
    \label{eq:green_sol}
    f_w(x)=\int \mathcal{G}(x,s)u_{i}(\mathbf{s})\textrm{d}\mathbf{s}
\end{equation}

Hence, given the kernel for $u_w$, we can derive the kernel for $f_w$ through a convolution operation which is shown in Eq.~\ref{eq:fi_kernel}.
\begin{equation}
    \label{eq:fi_kernel}
    k_{f_w}(\mathbf{x}_1,\mathbf{x}_2)=\iint \mathcal{G}(\mathbf{x}_1,\mathbf{s}_1)\mathcal{G}(\mathbf{x}_2,\mathbf{s}_2)k_{u_w}(\mathbf{s}_1,\mathbf{s}_2)\textrm{d}\mathbf{s}_1\textrm{d}\mathbf{s}_2
\end{equation}

To deal with the multiple outputs, we can place independent GP priors over common latent function $g_r$, then each $u_w$ and $f_w$ will obtain GP priors in turn.
Via a similar convolution, we can derive the kernel across different outputs (i.e. cross-covariance) $k_{f_w,f_{i'}}$.
In this way, the physics knowledge in the Green's function are hybridized with the kernel for the latent forces.
This procedure is used to learn the GP model with an convolved kernel from the training data.

\subsubsection{Augmented Latent Force Model}
Despite the elegance and success of LFM, the precondition for using LFM might be too restrictive.
To enable the kernel convolution, LFM requires that the differential equations must be linear and have analytical Green's functions.
However, many realistic differential equations from traffic flow models are either nonlinear or linear but do not possess analytical Green's functions, and therefore, cannot be exploited.
In some other cases, even with a tractable Green's function, the complete kernel of all the input variables is still infeasible to obtain.
In order to obtain an analytical kernel after the convolution, we have to convolve Green's functions with smooth kernels.
This may prevent us from integrating the physics knowledge into more complex yet highly flexible kernels, such as deep kernel \citep{wilson2016deep}.
To handle the intractable integral, we need to develop extra approximation methods, such as Monte-Carlo approximation.

Given the differential equation that describes the physics knowledge, the proposed augmented LFM equation is formulated in Eq.~\ref{eq:nonlinearlfm}.
\begin{equation}
    \label{eq:nonlinearlfm}
    \Psi f(\mathbf{x}) = g(\mathbf{x})
\end{equation}
where the differential operator $\Psi$ can be linear, nonlinear, or numerical differential operator, $g(\cdot)$ represents the unknown latent force functions, $f(\mathbf{x})$ is the function to be estimated from data $\mathcal{D}$.
We aim to create a generative component to regularize the original GP with a differential equation.
Using Augmented LFM, the differential equation is encoded to another GP, which is called a shadow GP.
To yield the numerical outputs, the kernel of the shadow GP should be efficiently learnable.
\begin{theorem}
    If one side of Eq.~\ref{eq:nonlinearlfm} is one GP, the other side is another GP.
\end{theorem}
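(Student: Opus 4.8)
The plan is to prove the equivalence in three stages that successively enlarge the class of operators $\Psi$ for which it holds, using Lemma~\ref{lemma} as the base case. \textbf{Stage 1 (linear $\Psi$).} If $\Psi$ is a genuine linear differential operator there is nothing new: Lemma~\ref{lemma} already gives that $\Psi f$ is a GP whenever $f$ is, and conversely, when a Green's function exists, $f(\mathbf{x})=\int \mathcal{G}(\mathbf{x},\mathbf{s})\,g(\mathbf{s})\,\mathrm{d}\mathbf{s}$ is a GP whenever $g$ is, since integration against a fixed kernel is a linear map and linear images of Gaussian processes are Gaussian; the derivative and cross-covariances are recovered by differentiating the covariance under the integral, which is legitimate by Assumption~\ref{assumption:smooth}. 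The mean and kernel of the other side are then explicit, as in Eq.~\ref{eq:fi_kernel}.

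\textbf{Stage 2 (numerical differential operator).} Here I would restrict to the finite collocation set $\mathbf{X}=[\mathbf{x}_{1},\ldots,\mathbf{x}_{N}]^\intercal$ on which the model is actually evaluated. A numerical differential operator (forward, backward, or central finite differences, or any consistent stencil) acts on $\mathbf{f}=[f(\mathbf{x}_{1}),\ldots,f(\mathbf{x}_{N})]^\intercal$ as a fixed matrix $\mathbf{M}$, so the discretized form of Eq.~\ref{eq:nonlinearlfm} reads $\mathbf{g}=\mathbf{M}\mathbf{f}$ (and $\mathbf{f}=\mathbf{M}^{-1}\mathbf{g}$, or $\mathbf{M}^{+}\mathbf{g}$, when one inverts). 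Since $p(\mathbf{f}\mid\mathbf{X})=\mathcal{N}(\mathbf{f}\mid\mathbf{m},\mathbf{K})$, the pair $(\mathbf{f},\mathbf{g})$ is jointly Gaussian with $\mathbf{g}\sim\mathcal{N}(\mathbf{M}\mathbf{m},\,\mathbf{M}\mathbf{K}\mathbf{M}^\intercal)$ and cross-covariance $\mathrm{Cov}(\mathbf{f},\mathbf{g})=\mathbf{K}\mathbf{M}^\intercal$; hence the shadow process is itself a (multi-output) GP whose mean and kernel are $\mathbf{M}\mathbf{m}$ and $[\mathbf{M}\mathbf{K}\mathbf{M}^\intercal]_{ij}$ — exactly the efficiently learnable kernel the construction requires. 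I would remark that $\mathbf{M}\mathbf{K}\mathbf{M}^\intercal$ is automatically positive semidefinite but may be rank-deficient, so a small diagonal jitter keeps it a valid kernel in the sense of Assumption~\ref{assumption:smooth}, and that the boundary stencils must be fixed so that $\mathbf{M}$ is well defined.

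\textbf{Stage 3 (nonlinear $\Psi$).} For the genuinely nonlinear traffic operators (LWR, PW, ARZ), the strategy is to reduce to Stages 1--2. First, introduce auxiliary output functions for the nonlinear compositions — e.g. treating the flux $q=\rho v$, or $v-V(\rho)$, as additional coordinates of a multi-output GP — so that the conservation part of the system becomes \emph{linear} in the augmented output vector and only the fundamental-diagram relation carries the nonlinearity; second, linearize the residual nonlinear map about the GP mean by a first-order Taylor expansion (equivalently, an EKF- or unscented-style local affine surrogate), so that locally $\Psi$ acts as an affine transformation of $\mathbf{f}$. The conclusion of Stage 2 then applies to the augmented, linearized operator, and the other side of Eq.~\ref{eq:nonlinearlfm} is again a GP with an explicitly computable mean and kernel.

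\textbf{Main obstacle.} The real work is Stage 3: a nonlinear operator does not map a Gaussian process onto a Gaussian process exactly, so the statement is literally true only for linear or numerically linearized $\Psi$. I expect the crux to be justifying that the nonlinear terms can be \emph{absorbed} — either into extra latent force functions $g_r$ in the spirit of Eq.~\ref{eq:udecomp}, or by controlling the Taylor remainder — so that the resulting shadow GP is a faithful Gaussian surrogate rather than a crude approximation, while keeping the induced kernel positive-definite and cheap to evaluate. By contrast, the linear and finite-difference cases follow essentially immediately from the fact that linear images of Gaussian vectors are Gaussian, together with the smoothness hypothesis needed to differentiate the kernel.
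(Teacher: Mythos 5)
Your Stages 1 and 2 are correct and considerably more rigorous than what the paper actually does: the paper's proof is a three-sentence assertion that applying a differential operator on one GP results in another GP, that the regularization is realized via a valid generative model component rather than process differentiation, and that the resulting covariances are simply learned empirically because they are not obvious analytically. In other words, the paper does not derive the law of $\Psi f$ at all; it sidesteps the question by construction --- the shadow process is not obtained as the pushforward of the original GP under $\Psi$, but is instead \emph{declared} Gaussian by placing a fresh GP prior $\mathcal{N}(\omega\,|\,\mathbf{g},\hat{\mathbf{K}})$ on the sampled values $\mathbf{g}=\Psi\hat{\mathbf{f}}$ at the finite set of pseudo-inputs $\mathbf{Z}$, with $\hat{\mathbf{K}}$ fitted from data. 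Read that way, the theorem functions as a modeling choice rather than a mathematical claim, and your Stage 2 (a fixed finite-difference matrix $\mathbf{M}$ acting on a Gaussian vector gives $\mathcal{N}(\mathbf{M}\mathbf{m},\mathbf{M}\mathbf{K}\mathbf{M}^\intercal)$ with cross-covariance $\mathbf{K}\mathbf{M}^\intercal$) is the only part of the statement that admits a genuine proof beyond the linear case already covered by Lemma~\ref{lemma}.

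Your Stage 3 is where you and the paper part ways most sharply, and you are right about the mathematics: a nonlinear $\Psi$ (the $v\partial_x v$, $\rho v$, and $V(\rho)$ terms in the PW and ARZ operators) does not map a Gaussian process to a Gaussian process, so the statement as written is false in general, and no amount of Taylor expansion or output augmentation makes it exactly true --- your own closing paragraph concedes this. The paper never confronts the obstruction; it simply asserts applicability to any linear or nonlinear operator. So neither your Stage 3 nor the paper's argument proves the theorem as stated; the difference is that your write-up is honest about the gap and offers a defensible approximate surrogate (augmentation plus local affine linearization, which reduces to Stage 2), whereas the paper quietly replaces the claim with the weaker, true statement that one may always \emph{impose} a GP prior on the finite-dimensional vector $\Psi\hat{\mathbf{f}}$ evaluated at $\mathbf{Z}$. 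If you want your proof to match what the framework actually needs, drop the attempt to show that $\Psi f$ \emph{is} a GP and instead argue, as the paper implicitly does, that the regularizer only ever touches $\Psi\hat{\mathbf{f}}$ at finitely many points, where assigning a Gaussian likelihood is always well defined regardless of the true distribution of $\Psi f$.
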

\begin{proof}
    The reasoning is based on that applying a differential operator on one GP results in another GP.
    The regularization is fulfilled via a valid generative model component rather than the process differentiation, and hence can be applied to any linear or nonlinear differential operators.
    In view of the fact that the resultant covariance and cross-covariance are not obvious via analytical derivatives, the expressive kernels can be learned from data empirically.
\end{proof}

The original LFM starts with the RHS (right-hand side) of Eq.~\ref{eq:lfm}, assigns it a GP prior and then use the convolution operation to obtain the GP prior of the left-hand side (LHS) target function.
Since the convolution operation is an integration procedure, it can be more restrictive and challenging.
In contrast, our approach chooses a reverse direction, i.e. from LHS to RHS.
We first sample the target function with an expressive kernel, use differentiation operation to obtain the latent force, and then regularize it with another GP prior.
The differentiation operation is more flexible and convenient \citep{baydin2018automatic}, which does not need to restrict the operator and GP kernels to ensure tractable computation.
The computational challenge can be overcomed by using auto-differential libraries \citep{baydin2018automatic} and deep learning techniques (e.g. deep kernel, Tensorflow, PyTorch).
Therefore, the shadow GP can be efficiently learned from pseudo observations via differential computations.

\subsection{Physics regularized Gaussian process (PRGP)}
Involving the shadow GP, the design concept of the proposed PRGP is illustrated in Fig.~\ref{fig:framework}.
\begin{figure}[h!]
    \centering
    \begin{tikzpicture}[node0/.style={circle,draw},
                        node1/.style={circle,draw,fill=gray},
                        edge1/.style={->}]
        \node [node1] (x) at (-3.5, 3){$\mathbf{X}$};
        \node [node1] (y) at (-3.5, 1.25) {$\mathbf{Y}$};
        \node [node1] (z) at (-1.25, 3) {$\mathbf{Z}$};
        \node [node0] (f) at (-1.25, 1.25) {$\mathbf{\hat{f}}$};
        \node [node0] (g) at (0.5, 1.25) {$\mathbf{g}$};
        \node [node1] (o) at (0.5, -0.25) {$\omega$};
        \draw [edge1] (f) to node[above]{$\Psi$} (g) ;
        \draw [edge1] (g) to node {shadow GP} (o);
        \draw [edge1] (x) to node {data} (y) ;
        \draw [edge1] (y) to (f);
        \draw [edge1] (z) to node {GP} (f);
        \draw [edge1] (x) to (f);
    \end{tikzpicture}
    \caption{The proposed framework for physics regularized Gaussian process learning}
    \label{fig:framework}
       \vspace{-0.2in}
\end{figure}
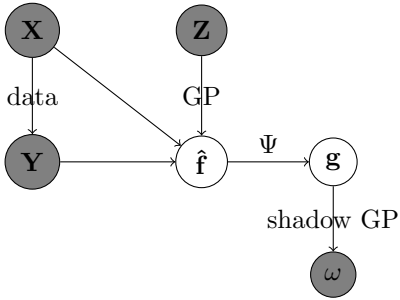
To enable Bayesian framework that incorporats the physics knowledge in Eq.~\ref{eq:nonlinearlfm}, we introduce a set of $m$ pseudo observations, $\omega=[0,\ldots,0]^\intercal$, to propose a generative component $p(\omega|\mathbf{X},\mathbf{Y})$ that acts as a physics knowledge based regularizer on the GP model $p(\mathbf{Y}|\mathbf{X})$. 
To sample the pseudo observation $\omega$, the input vector $\mathbf{Z}$ of the length $m$ is given as follow:
\begin{equation}
    \label{eq:z}
    \mathbf{Z}=[\mathbf{z}_1, \ldots, \mathbf{z}_m]^\intercal
\end{equation}
Then, we sample the posterior function values at each $\mathbf{z}_j, 1\leq j\leq m$ as shown in Eq.~\ref{eq:poszxy}.
\begin{equation}
    \label{eq:poszxy}
    p(f(\mathbf{z}_j)|\mathbf{z}_j, \mathbf{X}, \mathbf{Y}) = \mathcal{N}(f(\mathbf{z}_j)|\mu(\mathbf{z}_j),\nu(\mathbf{z}_j))
\end{equation}
We apply the differentiation operator in Eq.~\ref{eq:nonlinearlfm} to obtain the latent function values at $\mathbf{Z}$, $\mathbf{g}=[g(\mathbf{z}_1),\ldots,g(\mathbf{z}_m)]$, which is equivalent to sampling $g(\cdot)$ from the Green's function in Eq.~\ref{eq:nonlineargreen}.
\begin{equation}
    \label{eq:nonlineargreen}
    p(\mathbf{g}|\hat{\mathbf{f}})=\delta(\mathbf{g}-\Psi \hat{\mathbf{f}})
\end{equation}
Given the latent function values $\mathbf{g}$, we sample the pseudo observations $\omega$ from another GP.
\begin{equation}
    \label{eq:pogz}
    p(\omega|\mathbf{g},\mathbf{Z})=\mathcal{N}(\omega|\mathbf{g},\hat{\mathbf{K}})
\end{equation}
where $\hat{\mathbf{K}}$ is the covariance matrix and each element is calculated from the kernel $\hat{k}(\cdot,\cdot)$ in Eq.~\ref{eq:z_kernel}.
\begin{equation}
    \label{eq:z_kernel}
    [\hat{\mathbf{K}}]_{ij} = \hat{k}(\mathbf{z}_i,\mathbf{z}_j)
\end{equation}
Considering the symmetry property of the Gaussian distribution shown in Eq.~\ref{eq:symmetry}, the sampling of the pseudo observations in essence is equivalent to placing another GP prior over the sampled latent force function $\mathbf{g}$.
Therefore, this GP prior regularizes the sampled latent function.
Through the differential operator $\Psi$, the regularization propagates back to the target machine $\mathbf{f}(\cdot)$.
\begin{equation}
    \label{eq:symmetry}
    p(\omega|\mathbf{g},\hat{\mathbf{K}})=p(\mathbf{g}|\omega,\hat{\mathbf{K}})=p(\Psi \mathbf{f}|\mathbf{\omega},\hat{\mathbf{K}})
\end{equation}

Thus, the joint probability of the generative component is broken into four parts, as shown in Eq.~\ref{eq:ogfzxy}.  
\begin{equation}
    \label{eq:ogfzxy}
    p(\omega,\mathbf{g},\hat{\mathbf{f}},\mathbf{Z}|\mathbf{X},\mathbf{Y})=p(\mathbf{Z})p(\hat{\mathbf{f}}|\mathbf{Z},\mathbf{X},\mathbf{Y})p(\mathbf{g}|\hat{\mathbf{f}})p(\omega|\mathbf{g})
\end{equation}
where the prior of the $m$ input locations, $p(\mathbf{Z})$, $p(\hat{\mathbf{f}},\mathbf{Z},\mathbf{X},\mathbf{Y})$, and $p(\omega|\mathbf{g})$ are given by Eqs.~\ref{eq:priorz}-\ref{eq:pog}, respectively. Nals note that when no extra knowledge is available, $\mathbf{z}_{j}$ can be uniformly distributed assumedly. 

\begin{equation}
    \label{eq:priorz}
    p(\mathbf{Z})=\Pi_{j=1}^m p(\mathbf{z}_{j})
\end{equation}
\begin{equation}
    \label{eq:fzxy}
    p(\hat{\mathbf{f}},\mathbf{Z},\mathbf{X},\mathbf{Y})=\Pi_{j=1}^m[\mathcal{N}(\hat{\mathbf{f}}(\mathbf{z}_{j})|\mu(\mathbf{z}_{j}),\nu(\mathbf{z}_{j}))]
\end{equation}
\begin{equation}
    \label{eq:pog}
    p(\omega|\mathbf{g})=\mathcal{N}(\omega|\mathbf{g},\hat{\mathbf{K}})
\end{equation}
\subsection{Posterior regularized inference algorithm}
Posterior regularization is a powerful inference methodology in the Bayesian stochastic modeling framework \citep{ganchev2010posterior}.
The objective includes the model likelihood on data and a penalty term that encodes the constrains over the posterior of the latent variables.
Via the penalty term, we can incorporate our domain knowledge or constrains outright to the posteriors, rather than through the priors and a complex, intermediate computing procedure.
A variety of successful posterior regularization algorithms have been proposed \citep{he2013graph,ganchev2013cross,zhu2014bayesian,libbrecht2015entropic,song2016kernel}.
Hence it can be more convenient and effective.
For efficient model inference, we marginalize out all latent variables in the joint probability to avoid estimating extra approximate posteriors.
Then we derive a convenient evidence lower bound to enable the reparameterization.
Using the reparameterization and auto-differentiation libraries, we develop an efficient stochastic optimization algorithm based on the posterior regularization inference framework \citep{ganchev2010posterior}.

The proposed inference algorithm is derived as follows.
The generative component in Eq.~\ref{eq:ogfzxy} is bind to the original GP in Eq.~\ref{eq:pyx} to obtain a new principled Bayesian model.
The joint probability is given by Eq.~\ref{eq:yogfzx}.
\begin{equation}
    \label{eq:yogfzx}
    p(\mathbf{Y},\omega,\mathbf{g}, \hat{\mathbf{f}},\mathbf{Z}|\mathbf{X})=p(\mathbf{Y}|\mathbf{X})p(\omega,\mathbf{g},\hat{\mathbf{f}},\mathbf{Z}|\mathbf{X},\mathbf{Y})
\end{equation}
We first marginalize out all the latent variables in the generative component to avoid approximating their posterior in Eq.~\ref{eq:oyx}.
\begin{equation}
    \label{eq:oyx}
    \begin{split}
        p(\omega|\mathbf{X},\mathbf{Y})&=\iiint [p(\omega,\mathbf{g},\hat{\mathbf{f}},\mathbf{Z}|\mathbf{X},\mathbf{Y})\textrm{d}\mathbf{Z}\textrm{d}\mathbf{g}\text{d}\hat{\mathbf{f}}]\\
        &=\iint [p(\mathbf{Z})p(\hat{\mathbf{f}}|\mathbf{Z},\mathbf{X},\mathbf{Y})p(\omega|\Psi\hat{\mathbf{f}},\hat{\mathbf{K}})\textrm{d}\mathbf{Z}\textrm{d}\hat{\mathbf{f}}]\\
        &=\iint [p(\mathbf{Z})p(\hat{\mathbf{f}}|\mathbf{Z},\mathbf{X},\mathbf{Y})\mathcal{N}(\omega|\Psi \hat{\mathbf{f}},\hat{\mathbf{K}})\textrm{d}\mathbf{Z}\textrm{d}\hat{\mathbf{f}}]\\
        &=\mathbb{E}_{p(\mathbf{Z})}\mathbb{E}_{p(\hat{\mathbf{f}}|\mathbf{Z},\mathbf{X},\mathbf{Y})}\mathcal{N}(\Psi \mathbf{\hat{f}}|\mathbf{0},\hat{\mathbf{K}})
    \end{split}
\end{equation}

The parameter $\gamma \geq 0$ is used to control the strength of regularization effect.
\begin{equation}
    \label{eq:pyox}
    p(\mathbf{Y},\omega|\mathbf{X})=p(\mathbf{Y}|\mathbf{X})p(\omega|\mathbf{X},\mathbf{Y})^\gamma
\end{equation}

The objective is to maximize the log-likelihood in Eq.~\ref{eq:lllh}.
\begin{equation}
    \label{eq:lllh}
    \begin{split}
        \log [p(\mathbf{Y},\mathbf{\omega}|\mathbf{X})]=&\log [p(\mathbf{Y}|\mathbf{X})] + \gamma\log [p(\omega|\mathbf{X},\mathbf{Y})]\\
        =&\log [(\mathcal{N}(\mathbf{Y}|\mathbf{0},\mathbf{\hat{K}}+\tau^{-1} \mathbf{I}))]\\
        &+\gamma \log [\mathbb{E}_{p(\mathbf{Z})} \mathbb{E}_{p(\hat{\mathbf{f}}|\mathbf{Z},\mathbf{X},\mathbf{Y})} [\mathcal{N}(\Psi \hat{\mathbf{f}}|\mathbf{0},\hat{\mathbf{K}})]]
    \end{split}
\end{equation}

However, the log-likelihood is intractable due to the expectation inside the logarithm term.
To address this problem, the Jensen's inequality is used to obtain an evidence lower bound $\mathcal{L}$ in Eq.~\ref{eq:lllhlb}.
\begin{equation}
    \label{eq:lllhlb}
    \begin{split}
        \log [p(\mathbf{Y},\mathbf{\omega}|\mathbf{X})]\geq \mathcal{L}= & \log[\mathcal{N}(\mathbf{Y}|\mathbf{\omega},\hat{\mathbf{K}}+\tau^{-1}\mathbf{I})]\\
        & +\gamma \mathbb{E}_{p(\mathbf{z})} \mathbb{E}_{p(\hat{\mathbf{f}}|\mathbf{Z},\mathbf{X},\mathbf{Y})} [\log [\mathcal{N}(\Psi \hat{\mathbf{f}}|\omega,\hat{\mathbf{K}})]]
    \end{split}
\end{equation}

The existence of the general evidence lowerbound (ELBO) of a posterior distribution is proved with analyzing a decomposition of the Kullback-Leibler (KL) divergence by \cite{bishop2006pattern}.
Thus, we can obtain the ELBO of the log-likelihood in Eq.~\ref{eq:lllhlb}. 
However, the ELBO is still intractable due to the non-analytical expectation term.
In view of the expectation is out of the logarithm, we can maximize $\mathcal{L}$ via stochastic optimization shown in Alg.~\ref{alg:1}.

   \vspace{0.2in}
\begin{algorithm}[H]
    \caption{The stochastic inference algorithm}
    \label{alg:1}
    \SetAlgoLined
    \KwResult{Learned kernel parameters}
     Initialization\;
     \While{not reach stopping criteria}{
        Sample a set of input locations $\mathbf{Z}$\;
        Estimate the mean $\mu$ and the variance $\nu$ of $\mathbf{f}$ in Eqs.~\ref{eq:mu_def}-\ref{eq:nu_def}\;
        Generate a parameterized sample of the posterior target function values $\hat{\mathbf{f}}$ by the reparameterization in Eqs.~\ref{eq:repara1}-\ref{eq:repara2}\;
        Substitute the parameterized samples $\hat{\mathbf{f}}$ to obtain the unbiased estimated ELBO $\tilde{\mathcal{L}}$ in Eq.~\ref{eq:lllhlb}\;
        Calculate $\nabla_\theta\tilde{\mathcal{L}}$, an unbiased stochastic gradient of $\tilde{\mathcal{L}}$ via the auto-differential technique\;
        Update the parameters $\theta$ via the gradient decent shown in Eq.~\ref{eq:gradient} \;
     }
\end{algorithm}
\begin{equation}
    \label{eq:gradient}
    \theta^{t+1}=\theta^{t}+\alpha\nabla_\theta\tilde{\mathcal{L}}
\end{equation}
where $\alpha$ refers to the learning rate and $\theta$ denotes all trainable parameters.

To prove the correctness of Alg.~\ref{alg:1}, we need to prove the correctness of employing a regularization via ELBO as follows.
\begin{theorem}
    Maximizing the lowerbound of the log-likelihood is equivalent to a soft constraint over the posterior of the target function in the original GP.
\end{theorem}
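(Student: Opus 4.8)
The plan is to read the bound $\mathcal{L}$ of Eq.~\ref{eq:lllhlb} through the posterior regularization framework of \cite{ganchev2010posterior}: I will write down the variational problem that $\mathcal{L}$ solves, exhibit the explicit tilted GP posterior that maximizes it, and identify that tilt as the soft constraint in the statement. First I would record the standard variational identity for the \emph{original} GP: for any distribution $q(\hat{\mathbf{f}})$,
\begin{equation*}
\log p(\mathbf{Y}|\mathbf{X}) = \mathbb{E}_{q}\!\left[\log p(\mathbf{Y},\hat{\mathbf{f}}|\mathbf{X})\right] - \mathbb{E}_{q}\!\left[\log q(\hat{\mathbf{f}})\right] + \mathrm{KL}\big(q(\hat{\mathbf{f}})\,\|\,p(\hat{\mathbf{f}}|\mathbf{X},\mathbf{Y})\big),
\end{equation*}
so that the GP evidence is attained at $q=p(\hat{\mathbf{f}}|\mathbf{X},\mathbf{Y})$, which is exactly the posterior that Alg.~\ref{alg:1} samples from via the reparameterization.

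Next I would introduce the physics penalty. Writing $\phi(\hat{\mathbf{f}})=\log\mathcal{N}(\Psi\hat{\mathbf{f}}\mid\omega,\hat{\mathbf{K}})$, which (since $\omega=\mathbf{0}$) equals $-\tfrac12(\Psi\hat{\mathbf{f}})^{\intercal}\hat{\mathbf{K}}^{-1}(\Psi\hat{\mathbf{f}})$ up to a constant -- a measure of how badly a sampled trajectory violates the differential equation $\Psi\mathbf{f}=\mathbf{0}$ of Eq.~\ref{eq:nonlinearlfm} -- I consider the regularized objective
\begin{equation*}
J(q) = \mathbb{E}_{q}\!\left[\log p(\mathbf{Y},\hat{\mathbf{f}}|\mathbf{X})\right] - \mathbb{E}_{q}\!\left[\log q(\hat{\mathbf{f}})\right] + \gamma\,\mathbb{E}_{p(\mathbf{Z})}\mathbb{E}_{q}\!\left[\phi(\hat{\mathbf{f}})\right],
\end{equation*}
which is precisely the Lagrangian relaxation of ``maximize the GP ELBO subject to the expected constraint violation being bounded,'' with $\gamma\ge0$ the multiplier tuning the regularization strength. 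Regrouping $J$ as $-\mathrm{KL}(q\,\|\,q^{\star})$ plus a $q$-independent constant shows it is maximized at the tilted posterior
\begin{equation*}
q^{\star}(\hat{\mathbf{f}}) \;\propto\; p(\hat{\mathbf{f}}|\mathbf{X},\mathbf{Y})\;\mathcal{N}(\Psi\hat{\mathbf{f}}\mid\mathbf{0},\hat{\mathbf{K}})^{\gamma},
\end{equation*}
namely the original GP posterior reweighted so that function draws obeying the physical law receive more mass; this is a soft constraint, collapsing to the hard constraint $\Psi\hat{\mathbf{f}}=\mathbf{0}$ as $\gamma\to\infty$ and disappearing as $\gamma\to0$.

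The remaining step is bookkeeping that ties $J$ to $\mathcal{L}$. Evaluating $J$ at $q=p(\hat{\mathbf{f}}|\mathbf{X},\mathbf{Y})$ collapses its first two terms to $\log p(\mathbf{Y}|\mathbf{X})=\log\mathcal{N}(\mathbf{Y}\mid\omega,\hat{\mathbf{K}}+\tau^{-1}\mathbf{I})$ and leaves $\gamma\,\mathbb{E}_{p(\mathbf{Z})}\mathbb{E}_{p(\hat{\mathbf{f}}|\mathbf{Z},\mathbf{X},\mathbf{Y})}[\phi(\hat{\mathbf{f}})]$, so $J\big(p(\hat{\mathbf{f}}|\mathbf{X},\mathbf{Y})\big)$ is termwise identical to $\mathcal{L}$; by Jensen's inequality this $\mathcal{L}$ is exactly the lower bound on $\log p(\mathbf{Y},\omega|\mathbf{X})$ that the posterior regularization construction produces, and $J(q^{\star})\ge J\big(p(\hat{\mathbf{f}}|\mathbf{X},\mathbf{Y})\big)=\mathcal{L}$ shows the gap between the two is closed precisely by replacing the plain GP posterior with the softly constrained $q^{\star}$. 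Hence maximizing $\mathcal{L}$ over the kernel parameters $\theta$ is the same optimization as fitting the original GP while imposing the physical law as a soft constraint through $q^{\star}$, which is the claim.

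I expect the main obstacle to be making ``equivalent'' precise, since Alg.~\ref{alg:1} optimizes $\mathcal{L}$ over hyperparameters with $q$ held fixed at the current GP posterior rather than optimizing over $q$ directly. To close this, I would argue that because the posterior samples are reparameterized through $\mu(\mathbf{z}_j),\nu(\mathbf{z}_j)$ -- hence through $\theta$ -- the gradient of the $\gamma$-term is exactly the gradient of the soft-constraint penalty evaluated on that posterior, so the fixed point of the algorithm coincides with the stationary point of $J$ at $q^{\star}$. A secondary, purely technical point is to carry the $\mathbb{E}_{p(\mathbf{Z})}$ expectation and the normalizing constants $\log\lvert 2\pi\hat{\mathbf{K}}\rvert$ through every line so that the ``termwise identical'' comparison with Eq.~\ref{eq:lllhlb} is literal rather than merely morally true.
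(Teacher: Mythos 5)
Your proposal is correct and considerably more developed than the paper's own proof, which is purely verbal: the paper simply observes that $\mathcal{L}$ has the shape of a posterior-regularization objective (a GP marginal likelihood plus a $\gamma$-weighted expected log-density of $\Psi\hat{\mathbf{f}}$ under the shadow GP) and asserts that maximizing the penalty term ``encourages'' posterior samples of the target function to be consistent with the differential equation, hence ``can be viewed as'' a soft constraint; no variational identity, tilted posterior, or Lagrangian appears there. You make the same identification precise by writing down the variational problem that $\mathcal{L}$ solves and exhibiting its maximizer $q^{\star}\propto p(\hat{\mathbf{f}}\mid\mathbf{X},\mathbf{Y})\,\mathcal{N}(\Psi\hat{\mathbf{f}}\mid\mathbf{0},\hat{\mathbf{K}})^{\gamma}$, which gives genuine mathematical content to ``soft constraint'' and correctly recovers the hard constraint $\Psi\hat{\mathbf{f}}=\mathbf{0}$ as $\gamma\to\infty$. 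The one caveat --- which you already flag as the main obstacle --- is that Alg.~\ref{alg:1} never forms $q^{\star}$: it optimizes $\theta$ with $q$ pinned to the exact GP posterior induced by $\theta$, so the search runs over a $\theta$-parameterized slice of distribution space and its fixed point need not coincide with the unrestricted stationary point of your $J$. The weaker statement your gradient argument does establish --- that the $\gamma$-term penalizes the $\theta$-induced posterior for violating the physics --- is in fact all the paper's (informally worded) theorem claims, so your argument suffices; just avoid overstating the identification of the two optima.
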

\begin{proof}
    While the proposed inference algorithm is developed for a hybrid model rather than pure GP \citep{ganchev2010posterior}, the evidence lower bound optimized by Alg.~\ref{alg:1} is a typical posterior regularization objective that estimates a pure GP model and meanwhile penalizes the posterior of the target function to encourage a consistency with the differential equations.
    Jointly maximizing the term \[\mathbb{E}_{p(\mathbf{z})} \mathbb{E}_{p(\hat{\mathbf{f}}|\mathbf{Z},\mathbf{X},\mathbf{Y})} [\log [\mathcal{N}(\Psi \hat{\mathbf{f}}|\omega,\hat{\mathbf{K}})]]\] in the lowerbound of the log-likelihood $\mathcal{L}$ encourages all the possible latent force functions that are obtained from the target function $f(\cdot)$ via the differential operator $\Psi$ should be considered as being sampled from the same shadow GP.
    This can be viewed as a soft constraint over the posterior of the target function in the original GP model.
    Therefore, while being developed for inference of a hybrid model, the algorithm is equivalent to estimating the original GP model with some soft constraints on its posterior distribution.
    Thus, the physics knowledge regularizes the learning of the target function in the original GP.
\end{proof}
To apply the proposed method with multiple differential equations (i.e. FD, conservation law, momentum), 
Fig.~\ref{fig:traffic} shows the multi-equation multi-output framework of applying the proposed method to model the stochastic traffic flow process.
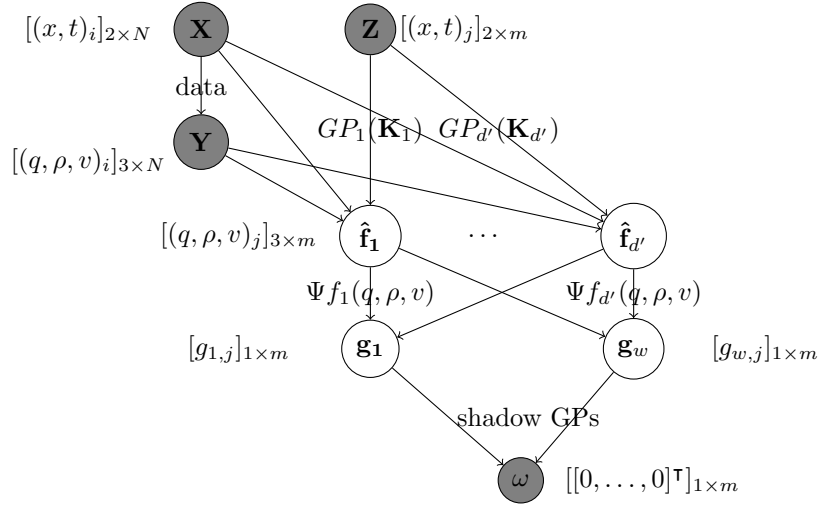
\begin{figure}[h!]
    \centering
    \begin{tikzpicture}[node0/.style={circle,draw},
                        node1/.style={circle,draw,fill=gray},
                        edge1/.style={->}]
        \node [node1] (x) at (-3.5, 3){$\mathbf{X}$};
        \node [node1] (y) at (-3.5, 1.5) {$\mathbf{Y}$};
        \node [node1] (z) at (-1.25, 3) {$\mathbf{Z}$};
        \node [node0] (f1) at (-1.25, 0.25) {$\mathbf{\hat{f}_1}$};
        \node [node0] (g1) at (-1.25, -1.25) {$\mathbf{g_1}$};
        \node (d) at (0.25, 0.25) {$\ldots$};
        \node [node0] (f2) at (2.25, 0.25) {$\mathbf{\hat{f}}_{d^\prime}$};
        \node [node0] (g2) at (2.25, -1.25) {$\mathbf{g}_w$};
        \node [node1] (o) at (0.75, -3) {$\omega$};

        \node (input) at (-5, 3) {$[(x,t)_i]_{2\times N}$};
        \node (output) at (-5, 1.25) {$[(q,\rho,v)_i]_{3\times N}$};
        \node (p_intput) at (0, 3) {$[(x,t)_j]_{2\times m}$};
        \node (p_output) at (-3, 0.25) {$[(q,\rho,v)_j]_{3\times m}$};
        \node (g_output1) at (-3, -1.25) {$[g_{1,j}]_{1\times m}$};
        \node (g_output2) at (4, -1.25) {$[g_{w,j}]_{1\times m}$};
        \node (0) at (2.5, -3) {$[[0,\ldots,0]^\intercal]_{1\times m}$};
        \draw [edge1] (f1) to node {$\Psi f_1(q,\rho,v)$} (g1) ;
        \draw [edge1] (f2) to node {$\Psi f_{d^\prime}(q,\rho,v)$} (g2) ;
        \draw [edge1] (f1) to (g2);
        \draw [edge1] (f2) to (g1);              
        \draw [edge1] (g1) to node[anchor=west] {shadow GPs} (o);
        \draw [edge1] (g2) to (o);
        \draw [edge1] (x) to node {data} (y) ;
        \draw [edge1] (y) to (f1);
        \draw [edge1] (y) to (f2);
        \draw [edge1] (z) to node {$GP_1 (\mathbf{K}_1)$} (f1);
        \draw [edge1] (z) to node {$GP_{d^\prime} (\mathbf{K}_{d^\prime})$} (f2);
        \draw [edge1] (x) to (f1);
        \draw [edge1] (x) to (f2);
    \end{tikzpicture}
    \caption{The proposed framework for multi-output multi-equation PRGP learning}
    \label{fig:traffic}
       \vspace{-0.2in}
\end{figure}

The log-likelihood and the ELBO of the traffic flow model can be formulated in Eq.~\ref{eq:lllhlb_traffic}.
\begin{equation}
    \label{eq:lllhlb_traffic}
    \begin{split}
        \log [p(\mathbf{Y},\mathbf{\omega}|\mathbf{X})]\geq \mathcal{L}=& \sum_{i=1}^{d^\prime} \log[\mathcal{N}([\mathbf{Y}]_i|\mathbf{\omega},\hat{\mathbf{K}}_i+\tau^{-1}\mathbf{I})]\\
        & +\sum_{w=1}^W\gamma_w \mathbb{E}_{p(\mathbf{z})} \mathbb{E}_{p(\hat{\mathbf{f}}_w|\mathbf{Z},\mathbf{X},\mathbf{Y})} [\log [\mathcal{N}(\Psi \hat{\mathbf{f}}_w|\omega,\hat{\mathbf{K}}_w)]]
    \end{split}
\end{equation}
\subsubsection{Expressive kernels}
The expressive kernels are defined as the non-parametric smooth covariance functions, such as the well-known Squared Exponential Automatic Relevance Determination (SEARD) Kernel, and Radial Basis Function (RBF) kernel \citep{bishop2006pattern}, and deep kernels \citep{wilson2016deep}. The employed kernel functions are shown as follows:

The SE-ARD kernel is formulated in Eq.~\ref{eq:seard}.
\begin{equation}
    \label{eq:seard}
    k(\mathbf{x}_i,\mathbf{x}_j)=\sigma^2\exp(-(\mathbf{x}_i-\mathbf{x}_j)^\intercal\textrm{diag}(\mathbf{\eta}(\mathbf{x}_i-\mathbf{x}_j)))
\end{equation}
where $\textrm{diag}(\cdot)$ represents the diagonal matrix, $\sigma$ and $\eta$ are kernel parameters.

The RBF kernel is formulated in Eq.~\ref{eq:rbf}.
\begin{equation}
    \label{eq:rbf}
    k(\mathbf{x}_i,\mathbf{x}_j)=\exp(-\frac{(||\mathbf{x}_i-\mathbf{x}_j||)^2}{2\sigma^2})
\end{equation}
where $\sigma$ is the kernel parameter.

\subsubsection{Algorithm complexity}
The time complexity of the inference of the original GP is $O(N^3)$.
The time complexity of the inference of the shadow GP is $O(m^3)$.
Thus, the total time complexity for the inference of two GPs is $O((Nd^\prime)^3+m^3)$.
To store the kernel metrics of original GP and the shadow GP, the space complexity is $O((Nd^\prime)^2+m^2)$.
In the testing phase, the time complexity of the model estimation is marginal (less than $1$ ms) empirically.

\subsection{Physics regularized traffic state estimation}
To apply the proposed method, the traffic flow models need to be converted to the form of Eq.~\ref{eq:nonlinearlfm}. In this study, we aims to encode three classical traffic flow models, LWR, PW, and ARZ, into the GP and compare their performance under the framework of PRGP. More specifically, the converted LWR, PW, ARZ models are presented as follows.
In PRGP, the stochastic conservation law of LWR is formulated in Eq.~\ref{eq:lwr_gp}.
\begin{equation}
    \label{eq:lwr_gp}
    \Psi f_1(q,\rho,v) = \partial_t\rho+\partial_x q = g_1
\end{equation}
The stochastic PW model is formulated in Eqs.~\ref{eq:pw1_gp}-\ref{eq:pw2_gp}.
\begin{equation}
    \label{eq:pw1_gp}
    \Psi f_1(q, \rho,v) = \partial_t\rho+\partial_x(\rho v) = g_1
\end{equation}
\begin{equation}
    \label{eq:pw2_gp}
    \Psi f_2(q, \rho,v) = \partial_t v+v\partial_x v+\frac{V-V(\rho)}{\tau_0}+\frac{c^2_0}{\rho}\partial_x\rho = g_2
\end{equation}
And the stochastic ARZ model is formulated in Eqs.~\ref{eq:arz1_gp}-\ref{eq:arz2_gp}.
\begin{equation}
    \label{eq:arz1_gp}
    \Psi f_1(q, \rho,v) = \partial_t\rho+\partial_x(\rho v) = g_1
\end{equation}
\begin{equation}
    \label{eq:arz2_gp}
    \Psi f_2(q, \rho,v) = \partial_t(v-V(\rho) + v\partial_x (v-V(\rho))+\frac{v-V(\rho)}{\tau_0} = g_2
\end{equation}

\section{Numerical Tests with Field Data}\label{sec:4}
\subsection{Case setting}
To evaluate the performance of the proposed PRML framework, We applied the three PRGP models to estimate the traffic flow in a stretch of the interstate freeway I-15 across Utah, U.S.
The Utah Department of Transportation (UDOT) has installed sensors every a few miles along the freeway.
Each sensor counts the number of vehicles passed every minute, measures the speed of each vehicle, and sends the data back to a central database, named Performance Measurement System (PeMS).
The collected real-time data and road conditions are available online and can be accessed by the public.
For model evaluations, the data, from August 5, 2019 to August 11, 2019, were collected by four sensors on the I-15, Utah.  
The input variables include the location coordinates of each sensor and the time of each read.
The studied stretch is illustrated in Fig.~\ref{fig:stretch}, where the yellow line indicates the studied freeway segments and the blue bars represent the locations of traffic detectors.
In the case, the data is shuffled and randomly split into the training set and testing set separately. 
\begin{figure}[h!]
    \centering
    \includegraphics[height=8cm]{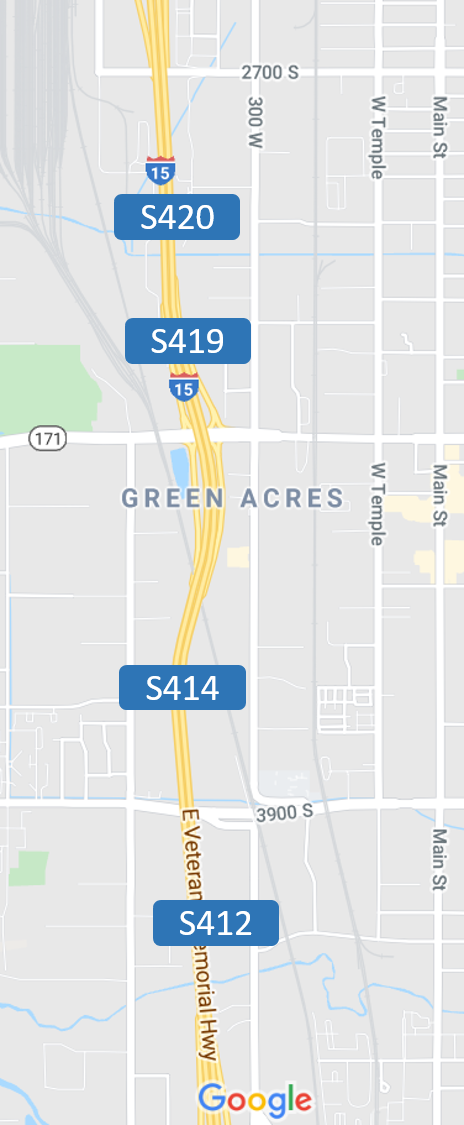}
    \caption{The stretch of the studied freeway segment which includes four detectors}
    \label{fig:stretch}
\end{figure}
\subsection{Implementation}
The deep kernel can be in any neural network structure, such as the feed-forward neural network, and can be fine-tuned to achieve better empirical results.
Incorporating the SEARD and RBF kernels, the compound kernel of the $d'$-dimensional original GP and the $W$-dimensional shadow GP are computed in Fig.~\ref{fig:implement}.
The procedure for estimating the target traffic state $q,v$ of any given input $x,t$ is illustrated in Fig.~\ref{fig:predict}.
In the multi-output multi-equation PRGP, the $d'$-dimension means for each dimension of $\mathbf{y}$ creating one compound kernel, and the $W$-dimension means for each differential equation creating one compound kernel.
Note that the structure of the GPs can be fine-tuned to achieve better empirical performance.
\begin{figure}[h!]
    \centering
    \includegraphics[width=0.7\textwidth]{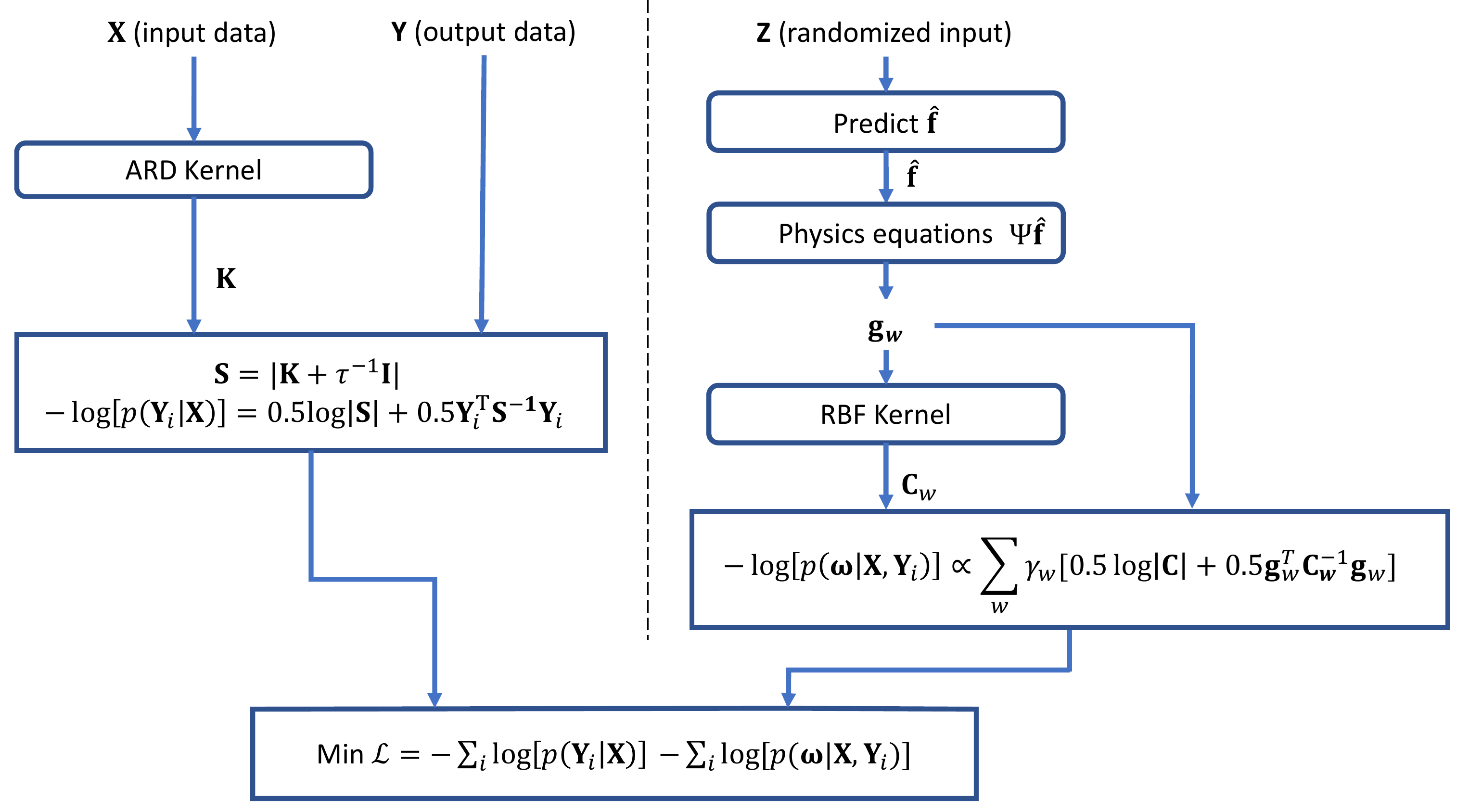}
    \caption{The structure of the proposed loss function}
    \label{fig:implement}
\end{figure} 
\begin{figure}[h!]
    \centering
    \includegraphics[width=0.7\textwidth]{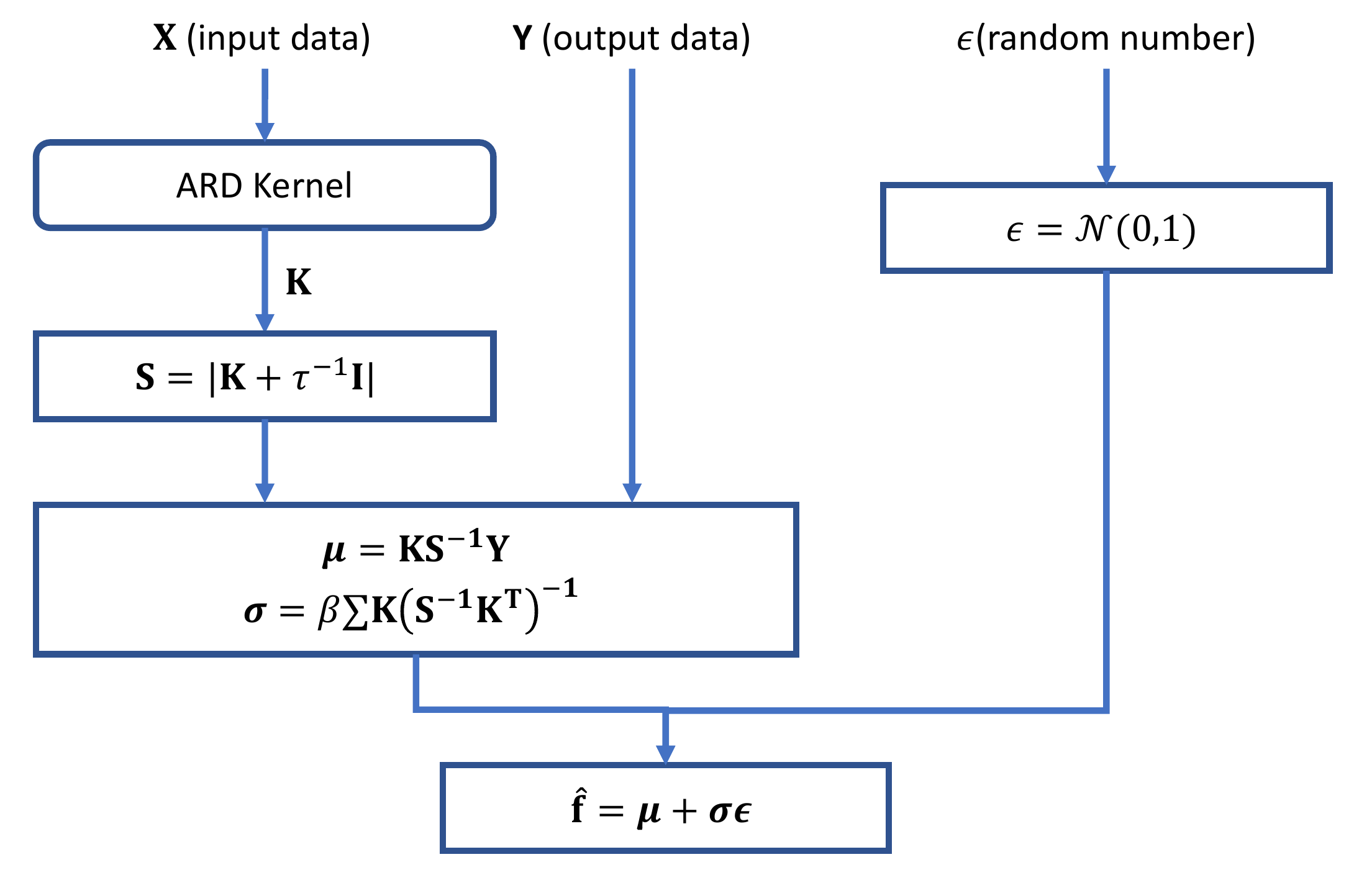}
    \caption{The structure of estimation}
    \label{fig:predict}
       \vspace{-0.2in}  
\end{figure} 

In the experiments, the parameters of the proposed method are set as follows: 
(a) the number of pseudo observations $m=10$, 
(b) the strength of regularization $\lambda$ is fine-tuned numerically.
The proposed inference algorithm is implemented in the Tensorflow framework, where the optimizer ADAM \citep{kingma2014adam} is chosen for updating the parameters.

\subsection{Results Analysis}

\subsubsection{Comparison with Pure Machine Learning Models}
To prove the superiority of the proposed PRML framework compared with pure ML models, this subsection aims to compare the three PRGP models, LWR-PRGP, PW-PRGP, and ARZ-PRGP, with pure GP and other popular ML models such as multilayer perceptron, support vector machine, and random forest \citep{bishop2006pattern}. Also recall that one main contribution of PRML is that it is more explainable in terms of model performance. Hence, this study further adopts another physical model, the well-known heat equation, to prove the indispensability of classical traffic flow models in the PRML framework, since the heat equation is not suitable to model traffic flows. The heat equation is formulated in Eq.~\ref{eq:heat}.
\begin{equation}
    \label{eq:heat}
    \frac{\partial f_h(x,t)}{\partial t}=\beta_1\nabla^2 f_h(x,t)
\end{equation}

Note that the inputs of the proposed PRGP-based methods and classical traffic flow models are different. The latter method often requires the on-ramp and off-ramp flow observations as inputs, while the proposed method assumes unobserved on-ramp and off-ramp flows in the framework and does not require such data.
The training process of each model, with $500$ iterations and $2,880$ samples, costs $10,480$ seconds in average on a workstation equipped with a 3.5GHz 6-core CPU.
In the testing phase, the time complexity of the model estimation is marginal (less than $1$ second) empirically, similar to all ML models.
Note that the computational process can be accelerated by about $5$ time if a NVIDIA CUDA-capable GPU is used.
Figs~\ref{fig:prgp_flow}-\ref{fig:prgp_speed} compare the flow and speed estimations with the ground truth in the studied case. If the coefficient of the trend line is close to $1$ and the intercept is close to $0$, the estimation will be considered as accurate. The results show that both pure GP and proposed PRGP models can perform well in estimating the flows and speeds.
\begin{figure}[h!]
    \centering
        \begin{subfigure}[b]{0.45\textwidth}
        \centering
        \includegraphics[width=\textwidth]{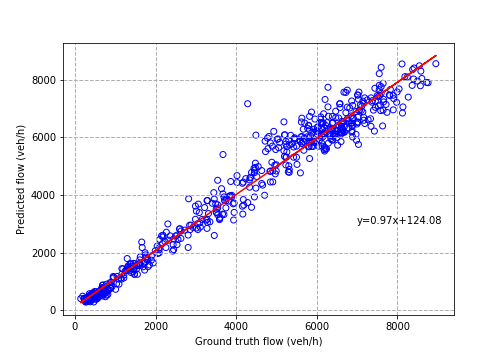}
        \caption{GP}
        \label{fig:gp_flow}
    \end{subfigure}
    \hfill
    \begin{subfigure}[b]{0.45\textwidth}
        \includegraphics[width=\textwidth]{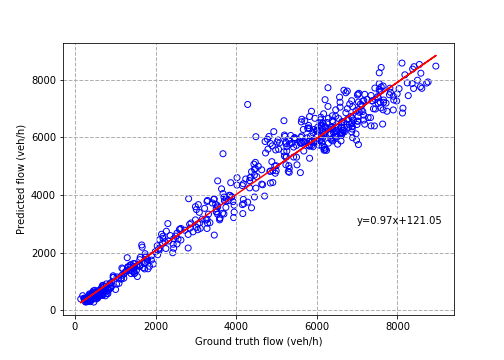}
        \caption{LWR-PRGP}
        \label{fig:pigp_lwr_flow}
    \end{subfigure}
    \\
    \begin{subfigure}[b]{0.45\textwidth}
        \includegraphics[width=\textwidth]{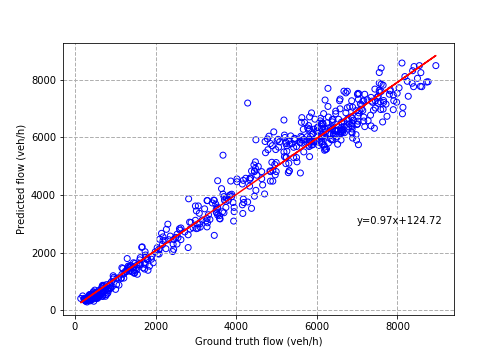}
        \caption{PW-PRGP}
        \label{fig:pigp_pw_flow}
    \end{subfigure}
    \hfill
    \begin{subfigure}[b]{0.45\textwidth}
        \includegraphics[width=\textwidth]{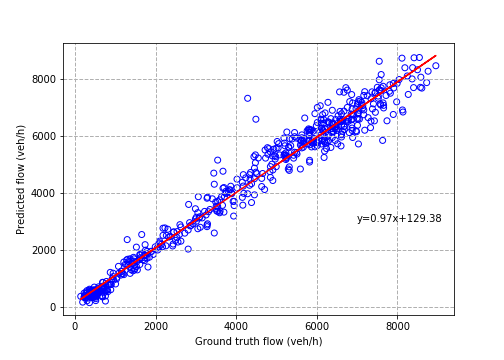}
        \caption{ARZ-PRGP}
        \label{fig:pigp_arz_noise_flow}
    \end{subfigure}
    \caption{Comparison between flow estimation by GP and PRGPs and the ground truth}
    \label{fig:prgp_flow}
       \vspace{-0.2in}
\end{figure}
\begin{figure}[h!]
    \centering
        \begin{subfigure}[b]{0.45\textwidth}
        \centering
        \includegraphics[width=\textwidth]{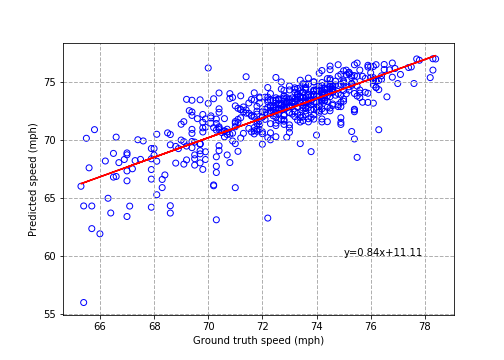}
        \caption{GP}
        \label{fig:gp_speed}
    \end{subfigure}
    \hfill
    \begin{subfigure}[b]{0.45\textwidth}
        \includegraphics[width=\textwidth]{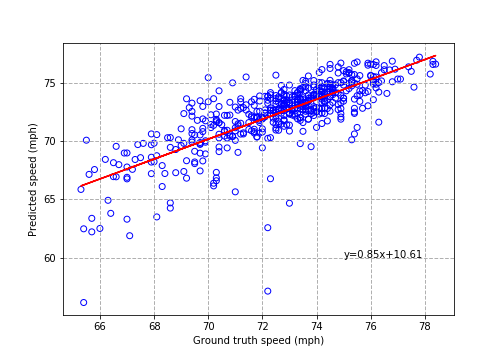}
        \caption{LWR-PRGP}
        \label{fig:pigp_lwr_speed}
    \end{subfigure}
    \\
    \begin{subfigure}[b]{0.45\textwidth}
        \includegraphics[width=\textwidth]{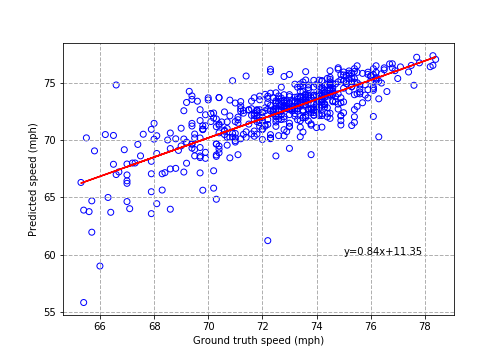}
        \caption{PW-PRGP}
        \label{fig:pigp_pw_speed}
    \end{subfigure}
    \hfill
    \begin{subfigure}[b]{0.45\textwidth}
        \includegraphics[width=\textwidth]{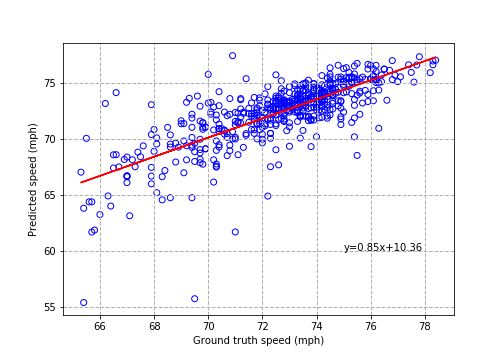}
        \caption{ARZ-PRGP}
        \label{fig:pigp_arz_speed}
    \end{subfigure}
    \caption{Comparison between speed estimations by GP and PRGPs and the ground truth}
    \label{fig:prgp_speed}
       \vspace{-0.2in}
\end{figure}

To quantify the precision of outputs, Rooted Mean Squared Error (RMSE) and Mean Absolute Percentage Error (MAPE) of each dimension are used as the performance metric, which are defined in Eqs.~\ref{eq:def_rmse}-\ref{eq:def_mape}. 
\begin{equation}
RMSE_j = \sqrt{\frac{1}{N}\sum_{i=1}^{N}{\Big(\frac{[\mathbf{y}_j]_{i}-[\hat{\mathbf{f}}_j]_i}{\sigma_i}\Big)^2}}, \forall j\in {1,\ldots,d^\prime}
\label{eq:def_rmse}
\end{equation}
\begin{equation}
MAPE_j = \frac{100\%}{N}\sum_{i=1}^{N}{\Big\vert\frac{[\mathbf{y}_j]_{i}-[\hat{\mathbf{f}}_j]_i}{[\mathbf{y}_j]_{i}}\Big\vert}, \forall j\in {1,\ldots,d^\prime}
\label{eq:def_mape}
\end{equation}


Table~\ref{tab:benchmark1} summarizes of the results of the comparable baselines and the proposed method in the same dataset. Among the four pure ML models, the GP can obviously outperform the other ML models in terms of providing more accurate estimations of both flows and speeds. The GP can yield a 39.74~veh/5-min of RMSE and a 13.70\% of MAPE for flow and a 2.7~mph of RMSE and a 2.64\% for MAPE for speed, while the other three produced much higher RMSEs and MAPEs of both flow and speed estimates. Further comparison between the pure GP and the three PRGP models reveal that PRGP models can improve the accuracy of both flow and speed estimations. However, the improvement is not significant, which is because pure GP can already achieve a very good estimation performance and leaves limited space for improvement by the PRGP. Moreover, to validate PRGP's contribution in making the results more explainable, the comparison with Heat-PRGP, which uses the physical knowledge from the heat equation, shows that a physical model that cannot precisely describe the traffic flow patterns could even downgrade the capability of the PRGP. Another side evidence is that PW and ARZ, which are the improved version of LWR, can improve the performance of the PRGP compared with the LWR.        
\begin{table}[h!]
    \caption{Comparison of the results of the proposed method and the baseline methods}
    \centering
    \begin{tabular}{c|p{2.0cm}|c|p{2.1cm}|c}
    \toprule
         Method & Flow RMSE (veh/5min) & Flow MAPE & Speed RMSE (mph) & Speed MAPE \\
    \midrule
        Multilayer perceptron  & 113.95    &30.80\%& 13.61 &19.91\%\\
        Support Vector Machine  & 124.84    &34.24\%& 9.58  &13.01\%\\
        Random Forest           & 108.24    &27.60\%& 8.66  &12.02\%\\
        pure GP                 & 39.74     &13.70\%& 2.76  & 2.64\%\\
        LWR-PRGP      & 37.19     &12.77\%& 2.96  & 2.65\%\\
        PW-PRGP       & 35.45     &12.42\%& 3.02  & 2.68\%\\
        ARZ-PRGP      & 34.75     &11.48\%& 2.90  & 2.72\%\\
        Heat-PRGP     & 79.51     &23.49\%& 5.20  & 6.75\%\\ 
    \bottomrule
    \end{tabular}
    \label{tab:benchmark1}
\end{table}

\subsubsection{Comparison with physical models (Traffic Flow Models)}
To provide physical baselines for the performance comparison, the LWR, PW, ARZ models are calibrated with the obtained field data. For model calibration, we follow the method by \cite{akwir2018neural}, where the hybrid scheme of neural network and nonlinear partial differential equation is used to dynamically adjust all outputs of the three models to obtain their calibrated parameters.
Figs~\ref{fig:pk_flow}-\ref{fig:pk_speed} plot the estimated flow and speed from the three physical models versus the ground truth. Obviously, the estimation results are quite biased for both flow and speed.
\begin{figure}[h!]
    \centering
        \begin{subfigure}[b]{0.3\textwidth}
        \centering
        \includegraphics[width=\textwidth]{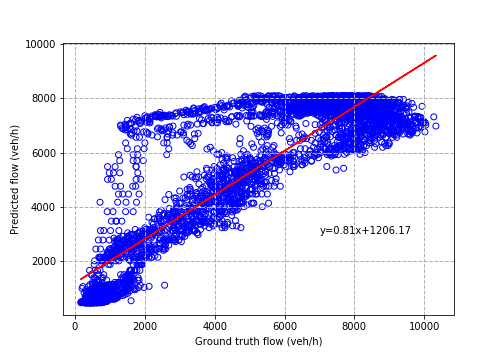}
        \caption{LWR}
        \label{fig:lwr_flow}
    \end{subfigure}
    \hfill
    \begin{subfigure}[b]{0.3\textwidth}
        \includegraphics[width=\textwidth]{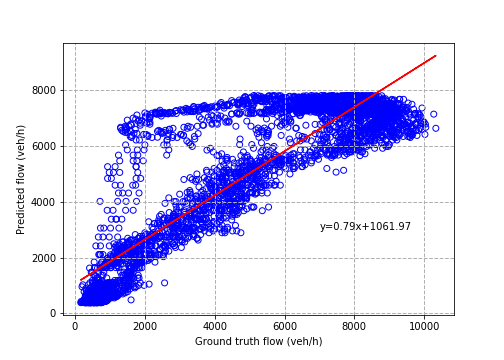}
        \caption{PW}
        \label{fig:pw_flow}
    \end{subfigure}
    \hfill
    \begin{subfigure}[b]{0.3\textwidth}
        \includegraphics[width=\textwidth]{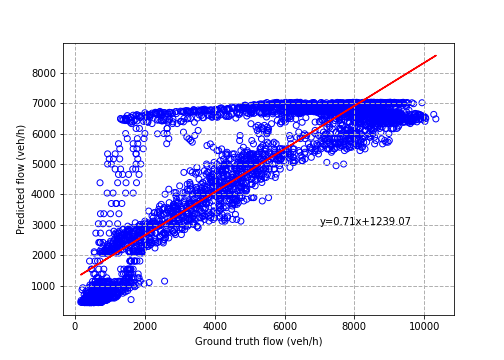}
        \caption{ARZ}
        \label{fig:arz_flow}
    \end{subfigure}
    \caption{Estimated flow by the calibrated physical models v.s. ground truth}
    \label{fig:pk_flow}
\end{figure}
\begin{figure}[h!]
    \centering
        \begin{subfigure}[b]{0.3\textwidth}
        \centering
        \includegraphics[width=\textwidth]{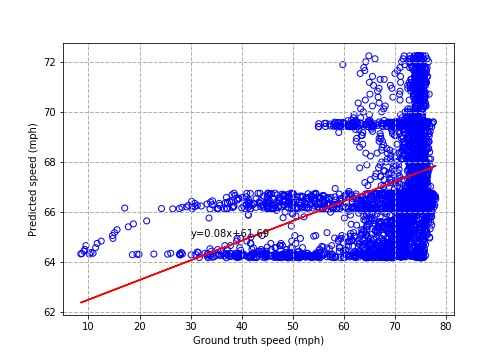}
        \caption{LWR}
        \label{fig:lwr_speed}
    \end{subfigure}
    \hfill
    \begin{subfigure}[b]{0.3\textwidth}
        \includegraphics[width=\textwidth]{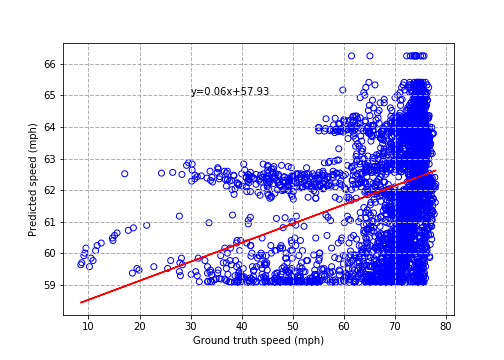}
        \caption{PW}
        \label{fig:pw_speed}
    \end{subfigure}
    \hfill
    \begin{subfigure}[b]{0.3\textwidth}
        \includegraphics[width=\textwidth]{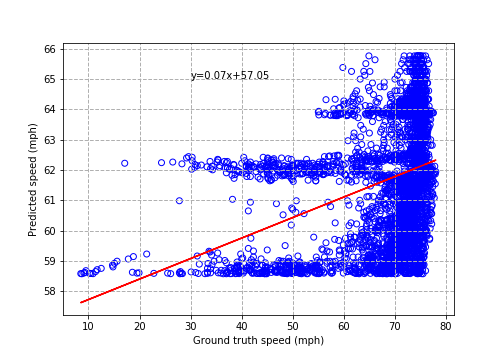}
        \caption{ARZ}
        \label{fig:arz_speed}
    \end{subfigure}
    \caption{Estimated speed by the calibrated physical models v.s. ground truth}
    \label{fig:pk_speed}
\end{figure}

To better justify models' estimation accuracy, Table~\ref{tab:benchmark2} shows the results of proposed method and the calibrated physical models in estimation errors. It can be found that the proposed method significantly outperforms the baseline methods by around $80~veh/5min$ in flow RMSE and $18\%$ in MAPE and $7~mph$ in speed RMSE and $15\%$ in MAPE. Hence, it can be concluded that the estimation performance of traffic flows models can be greatly improved if they are encoded into a ML framework. The real-world uncertainties of flow and speed can be captured by the ML portion properly. 
\begin{table}[h!]
    \caption{Comparison of the results of the proposed methods and the physics-based methods}
    \centering
    \begin{tabular}{c|p{2.0cm}|c|p{2.1cm}|c}
    \toprule
         Method & Flow RMSE (veh/5min) & Flow MAPE & Speed RMSE (mph) & Speed MAPE\\
    \midrule
        Calibrated LWR          & 115.75    &  32.96\%& 9.88  &   14.4\%\\
        LWR-regularized GP      & 37.19     &  12.77\%& 2.96  &   2.76\%\\
        Calibrated PW           & 115.80    &  30.00\%& 10.41 &   18.2\%\\
        PW-regularized GP       & 35.5      &  12.42\%& 3.02  &   2.68\%\\
        Calibrated ARZ          & 155.20    &  32.00\%& 12.71 &   18.4\%\\
        ARZ-regularized GP      & 34.75     &  11.48\%& 2.90  &   2.72\%\\
    \bottomrule
    \end{tabular}
    \label{tab:benchmark2}
\end{table}
\subsubsection{Robustness study}
As aforementioned, the proposed PRML framework is expected to be more robust than pure ML models on noisy dataset. Hence, in this subsection, $50\%$ of the training data is replaced by the flawed data, which are generated with $100~veh/5min$ noises in flows, and the testing data keep unchanged.
Notably, for model evaluations, the testing dataset is not mixed with noises. Also, since GP can outperform multilayer perceptron, support vector machine, and random forest in both flow and speed estimation, we will only examine the robustness of GP and PRGP in this subsection.
Table~\ref{tab:noise} and Figs~\ref{fig:noise_flow}-\ref{fig:noise_speed} summarize their estimation performance on the noised training data.
The results show that the GP has limited resistance to high biased data, e.g., caused by traffic detector malfunctions.
The three PRGP models can greatly outperform pure GP by about 160~veh/h of RMSE and over 100\% of MAPE in flow estimations. Hence, it can be concluded that the proposed PRML framework are much more robust than the pure ML models when the input data is subject to unobserved random noise. This is due to PRML's capability of adopting physical knowledge to regularized the ML training process. 
The results also show that heat equation does not capture the dynamics of the traffic flow, and only the well-developed traffic flow model can improve the accuracy of Gaussian process. 
\begin{table}[h!]
    \caption{Comparison of the estimation accuracy with noisy training dataset}
    \centering
    \begin{tabular}{c|p{2.0cm}|c|p{2.1cm}|c}
    \toprule
        Method & Flow RMSE (veh/5min) & Flow MAPE & Speed RMSE (mph) & Speed MAPE\\ 
    \midrule
        pure GP &   212.17  & 135.19\%&   5.96    & 3.35\% \\
        GP-LWR  &   41.78   &   9.73\%&   6.01    & 3.46\% \\
        GP-PW   &   41.11   &   9.60\%&   4.43    & 3.30\%\\
        GP-ARZ  &   35.37   &   9.51\%&   3.06    & 2.72\%\\
        GP-HEAT &   215.01  & 138.29\%&   4.31    & 33.6\%\\
    \bottomrule
    \end{tabular}
    \label{tab:noise}
\end{table}
\begin{figure}[h!]
    \centering
        \begin{subfigure}[b]{0.45\textwidth}
        \centering
        \includegraphics[width=\textwidth]{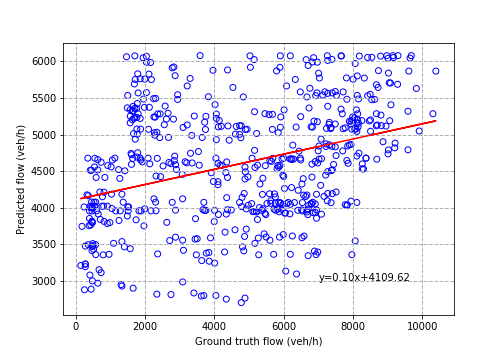}
        \caption{GP}
        \label{fig:gp_noise_flow}
    \end{subfigure}
    \hfill
    \begin{subfigure}[b]{0.45\textwidth}
        \includegraphics[width=\textwidth]{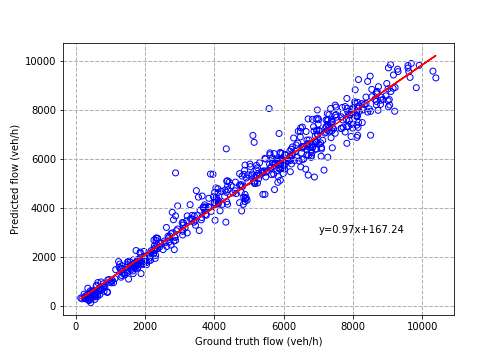}
        \caption{GP-LWR}
        \label{fig:pigp_lwr_noise_flow}
    \end{subfigure} 
    \\
    \begin{subfigure}[b]{0.45\textwidth}
        \includegraphics[width=\textwidth]{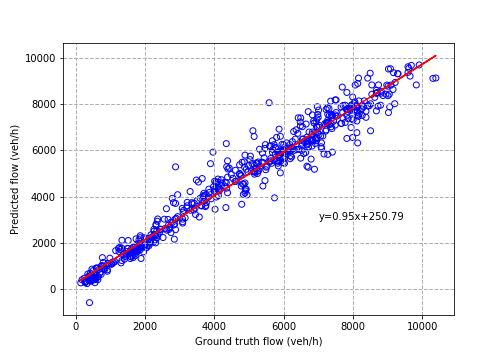}
        \caption{GP-PW}
        \label{fig:pigp_pw_noise_flow}
    \end{subfigure}
    \hfill
    \begin{subfigure}[b]{0.45\textwidth}
        \includegraphics[width=\textwidth]{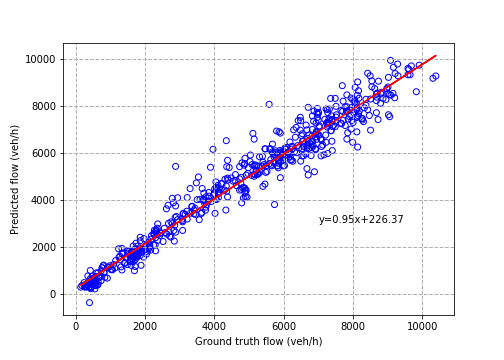}
        \caption{GP-ARZ}
        \label{fig:pigp_arz_noise_flow}
    \end{subfigure}
       \vspace{-0.1in}
    \caption{Comparison between flow estimation and ground truth with noisy training dataset}
    \label{fig:noise_flow}
       \vspace{-0.2in}
\end{figure}
\begin{figure}[h!]
    \centering
    \begin{subfigure}[b]{0.45\textwidth}
        \centering
        \includegraphics[width=\textwidth]{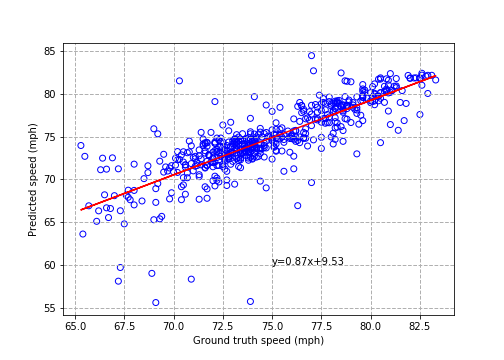}
        \caption{GP}
        \label{fig:gp_noise_speed}
    \end{subfigure}
    \hfill
    \begin{subfigure}[b]{0.45\textwidth}
        \centering
        \includegraphics[width=\textwidth]{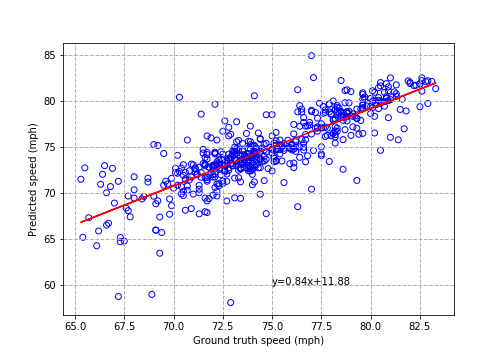}
        \caption{GP-LWR}
        \label{fig:pigp_lwr_noise_speed}
    \end{subfigure}
    \\
    \begin{subfigure}[b]{0.45\textwidth}
        \centering
        \includegraphics[width=\textwidth]{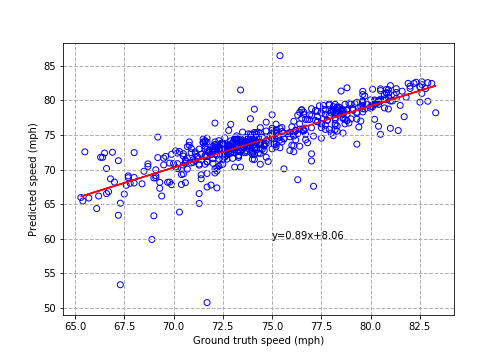}
        \caption{GP-PW}
        \label{fig:pigp_pw_noise_speed}
    \end{subfigure}
    \hfill
    \begin{subfigure}[b]{0.45\textwidth}
        \includegraphics[width=\textwidth]{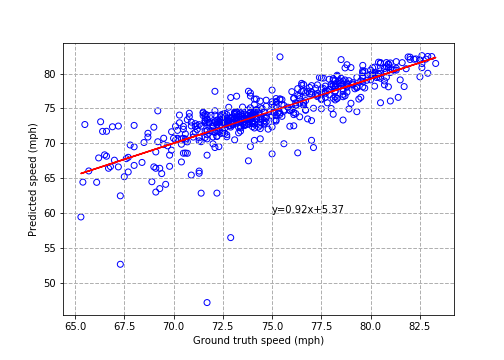}
        \caption{GP-ARZ}
        \label{fig:pigp_arz_noise_speed}
    \end{subfigure}
    \vspace{-0.1in}
    \caption{Comparison between speed estimation and ground truth with noisy training dataset}
    \label{fig:noise_speed}
       \vspace{-0.2in}
\end{figure}

\section{Conclusions and Future Research Directions}\label{sec:5}
In the literature, traffic flow models have been well developed to explain the traffic phenomena, however, have theoretical difficulties in stochastic formulations and rigorous estimation.
In view of the increasing availability of data, the data-driven methods are prevailing and fast-developing, however, have limitations of lacking sensitivity of irregular events and compromised effectiveness in sparse data.
To address the issues of both methods, a hybrid framework to incorporate the advantages of both methods is investigated.
This paper proposes a stochastic modeling framework to capture the random detection noise and the latent unobserved of traffic data as well as leveraging the well-defined fundamental diagram, conservation law and momentum conditions.
The traffic state indicators (i.e. flow, speed, density) are assumed to be multi-variant Gaussian distributed.
A physics regularized Gaussian process (PRGP) is proposed to encode the physics knowledge in the Bayesian inference structure as the shadow Gaussian process.
The shadow Gaussian process is proven to regularize the conventional constraint-free Gaussian process as a soft constraint.
To estimate the proposed PRGP, a posterior regularized inference algorithm is derived and implemented with auto-differentiation libraries.
The computational complexity is cubic of the product of the sample size and the output dimension $O((Nd^\prime)^3+m^3)$.
A preliminary real-world case study is conducted on PeMS detection data collected from a freeway segment in Utah and the well-known continuous traffic flow models (i.e. LWR, PW, ARZ) are tested.
In comparison to the pure machine learning methods and pure physical models, the numerical results justify the effectiveness and the robustness of the proposed method.

The potential directions for future Research may include: 
(1) extending the proposed method to leverage other models for traffic state estimation, such as discrete macroscopic traffic flow model regularized Gaussian process; 
(2) extending the proposed method to solve other problems, such as microscopic behavior models regularized Gaussian process for vehicle trajectory prediction; 
(3) extending the physics regularization methodology in other machine learning algorithms, such as random forest and support vector machine to combine general physics knowledge in learning tasks.

\bibliography{main}

\end{document}